\definecolor{cvprblue}{rgb}{0.21,0.49,0.74}
\newcommand{\R}{{\ensuremath{\mathbb R}}}
\newcommand{\C}{{\ensuremath{\mathbb C}}}
\newcommand{\E}{{\ensuremath{\mathbb E}}}
\newcommand{\diff}{{\ensuremath{\gamma}}}
\newcommand{\TND}[1]{{\ensuremath{\mathcal{TN}_{#1}\left(0, I_d \right)}}}
\newcommand{\nice}[2]{{\ensuremath{L^2(#1,  #2)}}}
\newcommand{\vn}[1]{\ensuremath{ { \mathbf{ #1}}}} 
\newcommand{\neigh}[1]{\ensuremath{ { {B}_{#1}}}} 
\DeclarePairedDelimiterX{\abs}[1]{\lvert}{\rvert}{#1} 
\DeclarePairedDelimiterX{\norm}[1]{\lVert}{\rVert}{#1} 
\newcommand{\eval}[2]{\ensuremath{\left. #1 \right\rvert_{#2}}} 
\newcommand{\PT}[2]{\ensuremath{{\mathcal P}_{#1 \shortleftarrow #2}}} 
\newcommand{\enc}{\ensuremath{{\mathcal{ E}}}} 
\newcommand{\dec}{\ensuremath{{\mathcal{ D}}}} 
\newcommand{\nfield}{\ensuremath{{\mathcal{ F}}}} 
\newtheorem{claim}{Claim} 
\DeclarePairedDelimiter{\rightbarinner}{.}{\rvert}
\NewDocumentCommand{\rightbar}{som}{%
  \IfBooleanTF{#1}
    {\rightbarinner*{#3}}
    {\IfNoValueTF{#2}{#3\rvert}{\rightbarinner[#2]{#3}}}%
}
\newif\ifeqSep
\newcommand{\labsym}[3]{
    \ifx#2\empty
    \stackrel{\phantom{#3}}{#1}
    \else
    \stackrel{#2}{\mathmakebox[\widthof{$\stackrel{#3}{#1}$}]{#1}}
    \fi
}
\title{Single Mesh Diffusion Models with Field Latents for Texture Generation}
\author{Thomas W. Mitchel$^{1,2}\footnotemark[2]$ \quad Carlos Esteves$^1$ \quad Ameesh Makadia$^1$ \vspace{0.075in} \\ 
$^1$Google Research \quad $^2$PlayStation \vspace{0.075in} \\
}
\begin{document}

\twocolumn[{
\renewcommand\twocolumn[1][]{#1}
\maketitle
\begin{center}
    \centering
    \captionsetup{type=figure}
    \begin{picture}(\linewidth,0.83\columnwidth)
    \put(0,-5){{\includegraphics[width=\linewidth]{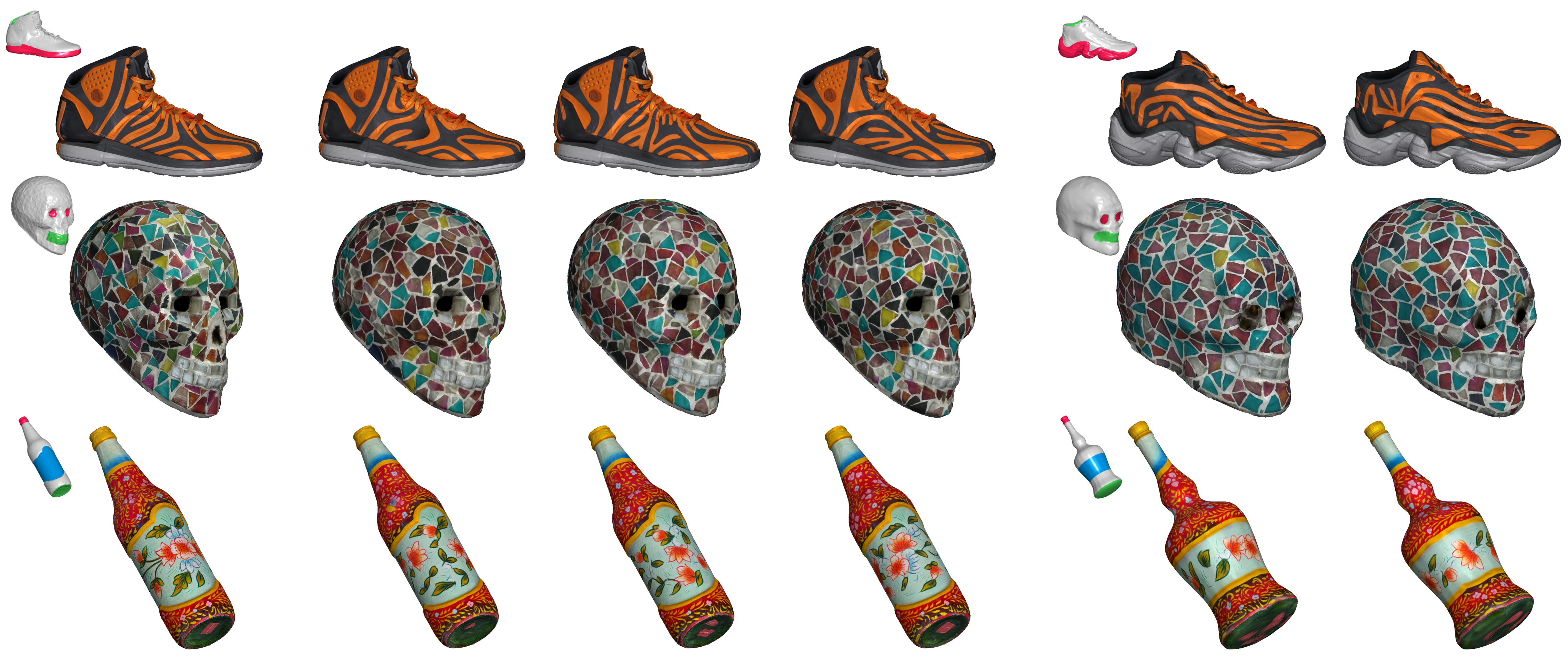}}}
    \thicklines
    \put(31, 209){ {\small {Input}}}
    \put(0, 205){\line(1, 0){85}}
    \put(170, 209){ {\small Generated textures}}
    \put(99, 205){\line(1,0){220}} 
    \put(350, 209){{\small Generative transfer to new geometry}}
    \put(334, 205){\line(1,0){162}} 
    \end{picture}
    \captionof{figure}{Our latent diffusion models operate directly on the surfaces of 3D shapes, synthesizing new high-quality textures (center) after training on a single example (left). Both our novel latent representation and diffusion models are isometry-equivariant, facilitating a notion of generative texture transfer by sampling pre-trained models on new geometries (right). Above our models are conditioned on coarse semantic labels  reflecting a subjective distribution of content, which delineate the sole and interior of the shoes, the eyes and mouths of the skulls, and the decals on the bottles.}
    \label{teaser_fig}
\end{center} 
 }]
\footnotetext[2]{Work done while at Google Research.}

\begin{abstract}
We introduce a framework for intrinsic latent diffusion models operating directly on the surfaces of 3D shapes, with the goal of synthesizing high-quality textures. Our approach is underpinned by two contributions: Field Latents, a latent representation encoding textures as discrete vector fields on the mesh vertices, and Field Latent Diffusion Models, which learn to denoise a diffusion process in the learned latent space on the surface. We consider a single-textured-mesh paradigm, where our models are trained to generate variations of a given texture on a mesh. We show the synthesized textures are of superior fidelity compared those from existing single-textured-mesh generative models. Our models can also be adapted for user-controlled editing tasks such as inpainting and label-guided generation. The efficacy of our approach is due in part to the equivariance of our proposed framework under isometries, allowing our models to seamlessly reproduce details across locally similar regions and opening the door to a notion of generative texture transfer.  Code and visualizations are available at \url{https://single-mesh-diffusion.github.io/}.
\end{abstract}

\section{Introduction} 
\label{sec:intro}
The emergence of latent diffusion models (\textbf{LDMs}) \cite{rombach2022high} as powerful tools for 2D content creation has motivated efforts  to replicate their success across different modalities. A particularly attractive direction is the synthesis of textured 3D assets, due to the high cost of building photorealistic objects which can require intricately configured scanning solutions~\cite{downs2022google} or expertise in 3D modeling.

The majority of 3D synthesis methods employing LDMs operate in an image-based setting where the 3D representation is optimized through differentiably rendered images, often in conjunction with score distillation sampling (\textbf{SDS})~\cite{poole2022dreamfusion, metzer2023latent, richardson2023texture, chen2023text2tex, qian2023magic123, cao2023texfusion, tang2023textguided, wang2023breathing}. This design allows the models to leverage the full capabilities of pre-trained 2D LDMs.  A second, alternative method is to rasterize both geometry and texture onto a 3D grid which are compressed into triplane latent features over which the DM operates \cite{wu2023sin3dm}. 
However, in applications where texture synthesis takes precedence over 3D geometry, both approaches sacrifice fidelity in favor of a workable representation --- models relying on planar renderings suffer from view inconsistencies when mapping back to the 3D shape while rasterization inherently aliases fine textural details.

Here, we observe that the discretization of a surface as a triangle mesh is itself a tractable and effective representational space. To this end, we present an original framework for intrinsic LDMs operating directly on the surfaces of shapes, with the goal of generating high-quality textures.

Our method consists of two distinct novel components, serving as the main contributions of our work. The first is a latent representation called \textit{Field Latents} (\textbf{FLs}). Textures are mapped to tangent vector features at the vertices which are responsible for characterizing the local texture. The choice of tangent vector features over scalars allows for the capture of directional information related to the local texture which we show enables superior quality reconstructions. More generally, FLs offer an effective form of perceptual compression in which a high-resolution texture, analogous to a continuous signal defined over a surface, is mapped to a collection of discrete vector fields taking values at the vertices of a lower-resolution mesh. The second component is a \textit{Field Latent Diffusion Model} (\textbf{FLDM}), which learns a denoising diffusion process in field latent space.  Our FLDMs are built upon field convolutions \cite{mitchel2021field}, surface convolution operators designed specifically to process tangent vector features.

Although we outline a general framework for constructing diffusion models for surfaces, inspired by the success of single-image generative models~\cite{shaham2019singan, nikankin2022sinfusion, wang2022sindiffusion, kulikov2023sinddm}, we tailor and deploy our FLDM architecture in the single-textured-mesh setting. We address the problem of generating variations of a given texture on a mesh (Figure~\ref{teaser_fig}, left and center), as well as tasks supporting user control such as label-guided generation and inpainting. The single-textured-mesh setting also lets us circumvent the data scarcity issue. Large-scale 3D datasets with high-quality textures have only recently become available \cite{deitke2022objaverse, deitke2023objaversexl}, with only a fraction of samples across disparate categories possessing complex non-uniform textures. Furthermore, many textured 3D assets that are handcrafted or derived from real-world scans are geometrically or stylistically unique, upon which large-scale models can be challenging to condition.

Experiments reveal our models enable flexible synthesis of high-resolution textures of qualitatively superior fidelity compared to existing single-textured-asset diffusion models, while simultaneously sidestepping the challenges of using 2D-to-3D approaches. Furthermore, both FLs and FLDMs are isometry-equivariant --- each commute with distance-preserving shape deformations. As a result, we find that our approach can be used for \textit{generative texture transfer}, wherein a FLDM trained on a single textured mesh can be sampled on a second similar mesh to texture it in the style of the first (Figure~\ref{teaser_fig}, right).

\section{Related Work}
\label{sec:related}

Existing generative models that synthesize textured 3D assets typically map content to an internal 2D representation which is computationally preferable to working directly over 3D Euclidean space. Perhaps the most popular approach involves iteratively rendering objects from different viewpoints, enabling the use of pre-trained 2D LDMs (\textit{e.g.} Stable Diffusion \cite{rombach2022high}) either indirectly as optimization priors \cite{poole2022dreamfusion, liu2023meshdiffusion, qian2023magic123, metzer2023latent} or to directly apply a full denoising process \cite{richardson2023texture, chen2023text2tex, cao2023texfusion, wang2023breathing}. The former case, pioneered by DreamFusion \cite{poole2022dreamfusion}, uses score distillation sampling (SDS), in which a neural radiance field is optimized such that the rendered images appear to be reasonable samples from a pre-trained LDM; in the latter, LDMs with depth conditioning are used to iteratively denoise the rendered images which are aggregatively projected to the texture map. However, SDS-based methods are computationally feasible only with low-resolution LDMs and small NeRFs, and are unable to synthesize fine details. Additionally, approaches that directly denoise and project must contend with artifacts arising from view inconsistencies and synthesized textures may contain unnatural tones or lighting effects as residuals from their image-trained LDM backbone. 

Alternative representational spaces to drive textured 3D synthesis have also been explored.
Texture Fields \cite{oechsle2019texture, gao2022get3d} encode textures as 3D neural fields and triplane feature maps have emerged recently as an efficient domain for 3D geometry \cite{wu2023sin3dm, zhang2023getavatar, wei2023taps3d}. In particular, Sin3DM \cite{wu2023sin3dm} encodes geometry and texture into an implicit triplane latent representation, then trains and samples a 2D DM on the resulting feature maps. However, geometry and texture must first be rasterized to a 3D grid before triplane convolutions can be employed, which can lead to aliasing of high-fidelity textures and geometric details due to memory constraints on the maximal grid resolution. To disentangle geometry from texture, several methods map textures to domains such as the image plane \cite{chen2022auv, svitov2023dinar, pavllo2021learning, yu2021learning} or sphere \cite{cheng2023tuvf}. Point-UV diffusion \cite{yu2023texture} trains and samples 2D DMs directly in the UV-atlas, though this approach suffers on meshes with disconnected or fragmented UV mappings.  

Similar to our approach, a third class of models encode textures as features at the simplices of a mesh. GAN-based methods Texturify \cite{siddiqui2022texturify} and Mesh2Tex \cite{bokhovkin2023mesh2tex} compress textures to attributes at the mesh facets, and differentiable rendering enables adversarial supervision in the image space.

Alternatively, Intrinsic Neural Fields (\textbf{INFs}) \cite{koestler2022intrinsic} introduces a latent representation wherein textures are encoded as eigenfunctions of the Laplacian taking values at the mesh vertices. However, features at points on faces are recovered through barycentric interpolation, which we will show leads to a breakdown in reconstruction fidelity at higher compression ratios. Recently, Manifold Diffusion Fields \cite{elhag2023manifold, zhuang2022diffusion} introduce the first fully intrinsic DMs defined over 2D Riemannian manifolds, and are based on a global attention mechanism which aggregates features at the vertices, limiting their applicability to coarse, low-resolution meshes due to complexity constraints. Here, our proposed FLDMs are also fully intrinsic but are built upon locally-supported convolutions defined over the manifold \cite{mitchel2021field}, allowing them to scale to higher-resolution meshes. 

Our single-textured-mesh FLDMs are inspired by recent work in which DMs are trained to capture internal patch distributions from a single image with the goal of generating diverse samples with similar visual content \cite{nikankin2022sinfusion, wang2022sindiffusion, kulikov2023sinddm}; In turn, these models can trace their lineage to the influential SinGAN architecture \cite{shaham2019singan} and its derivatives \cite{hinz2021improved, granot2022drop}. Here, we mirror the approach proposed by SinDiffusion \cite{wang2022sindiffusion} and denoise with a shallow convolutional UNet to control the receptive field and prevent the DM from overfitting to the texture. Similar to our setting, Sin3DM \cite{wu2023sin3dm} was the first to train LDMs in a single-textured-mesh paradigm, however they take an extrinsic approach to generating geometry \textit{and} texture, whereas our approach \textit{focuses only on synthesizing textures} which we show qualitatively are of higher-fidelity than those produced by Sin3DM.

\section{Method Overview} 
\label{sec:overview}
We introduce a framework for intrinsic LDMs operating directly on surfaces. The first component is a latent representation of mesh textures comprised of tangent vector features. At each vertex, the tangent features characterize the texture of a local neighborhood on the surface, and the collection of these features constitute a stack of vector fields we call \textit{Field Latents} (\textbf{FLs}). The FL space is learned with a locally-supported variational autoencoder \cite{kingma2013auto} (\textbf{FL-VAE}) which is described in Section~\ref{sec:latents}.

The second key component is the \textit{Field Latent Diffusion Model} (\textbf{FLDM}), which learns to denoise a diffusion process in the tangent space of a surface. Denoising networks are constructed with field convolutions (\textbf{FCs}), surface convolution operators acting on tangent vector fields \cite{mitchel2021field}.  We extend FCs to interleave scalar embeddings with the tangent vector features, enabling the injection of diffusion time-steps and optional conditioning, such as user-specified labels, into the denoising model.

In practice, FL-VAEs and FLDMs are applied to synthesize new textures from a single textured triangle mesh.
We pre-train a single FL-VAE by superimposing planar meshes over a large high-quality image dataset, obtaining a general-purpose latent space which can be used to train FLDMs on arbitrary textured meshes. To do so, a texture is first mapped to distributions in the tangent space at the vertices with the FL-VAE encoder, and an FLDM is trained to iteratively denoise tangent vector latent features. Afterwards, samples from the FLDM can be decoded with the FL-VAE decoder to synthesize new textures.

\paragraph{Importance of Isometry-Equivariance} Our FL-VAEs and FLDMs are designed to be equivariant under isometries, \textit{i.e.} they commute with distance-preserving shape deformations. Unlike images, points on a surface have no canonical orientation. The choice of local coordinates is ambiguous up to an arbitrary rotation, making it impossible to simply adapt standard image-based VAEs and diffusion models to surfaces. However, isometries manifest locally as rotations. Thus, designing our framework to be isometry-equivariant inherently solves the orientation ambiguity problem, enabling consistent, repeatable results.  Isometry-equivariance also results in tangible  benefits --- our models are able to seamlessly reproduce textural details across locally similar areas of meshes. Furthermore, while existing models transfer textures with pointwise maps \cite{donati2022complex} or generatively via conditioning on a learned token \cite{richardson2023texture, chen2023shaddr}, we can simply sample our pre-trained FLDMs on new, similar meshes to texture them in the learned style.

\section{Field Latents}
\label{sec:latents}

As in Knoppel \textit{et al.} \cite{knoppel2013globally} we associate tangent vectors with complex numbers.  For a surface $M$ and point $p \in M$, we assign to the tangent space an arbitrary orthonormal basis such that for any $\vn{v} \in T_pM$ we have $\vn{v} \equiv r e^{i\theta},$ with $r = \abs{\vn{v}}$ and $\theta$ the direction of $\vn{v}$ expressed in the frame. We make this convention explicit by denoting $\C_p \equiv T_pM$.

To describe the FL-VAE, we require a notion of multi-dimensional Gaussian noise in the tangent bundle of a surface $M$. Specifically, we denote the Gaussian distribution over $d$ copies of the tangent bundle $TM^d$ by \TND{M}. Samples $\epsilon \sim \TND{M}$ are  vector fields in $TM^d$ such that at each point $p \in M$, the coefficients of $\epsilon(p) \in \C_p^d$ expressed in the orthonormal basis are themselves samples from the $d-$dimensional standard normal distribution 
\begin{align}
    \epsilon(p) = \epsilon_1 + i \epsilon_2, \ \epsilon_1, \epsilon_2 \sim \mathcal{N}(0, I_d).\label{tangent_noise}
\end{align}

In this work we are concerned with isometries  $\diff:M \rightarrow N$, which enjoy the property that their push-forwards $d \diff: TM \rightarrow TN$, which define a smooth map between tangent spaces, manifest pointwise as rotations.  From Equation~(\ref{tangent_noise}) it is easy to see that $\TND{M}$ is symmetric under such transformations. Thus, it follows that sampling from \TND{N} is equivalent to pushing forward samples from \TND{M},
\begin{align} 
\epsilon' = \gamma \epsilon \qquad \textrm{for} \begin{aligned} \qquad \epsilon' \sim \TND{N} \\ \epsilon \sim \TND{M} \end{aligned} \label{equiv_noise}
\end{align}
with $\gamma \epsilon = \left[d\gamma \cdot \epsilon\right] \circ \diff^{-1}$ denoting the standard action of diffeomorphisms on vector fields via left-shifts.

\paragraph{FL-VAE Encoder}
The encoder $\enc$ is a network that, for any point $p \in M$,
takes $n-$dimensional scalar functions  $\psi \in \nice{M}{\R^n}$ (\textit{e.g.} texture RGB values with $n=3$) restricted to the surrounding geodesic neighborhood $\neigh{p} \subset M$ to the parameters of $d$ independent normal distributions in the tangent space at $p$:
\begin{align}
\begin{aligned}
\enc_{p}: \nice{\neigh{p}}{\mathbb{R}^n} &\rightarrow \C_p^d \times \mathbb{R}_{\geq 0}^d \\ %
\psi &\mapsto (\mu_p^{\psi}, \sigma_p^{\psi})
\end{aligned} \label{encoder}
\end{align}
Here, the means of the distributions $\mu_p^{\psi}$ are themselves \textit{tangent vectors} while the standard deviations $\sigma_p^{\psi}$ are \textit{scalars}. Latent codes characterizing the local restriction of $\psi$ are collections of tangent vectors $z_p^{\psi} \in \C_p^d$ drawn from the multivariate normal distribution parameterized by $(\mu_p^{\psi}, \sigma_p^{\psi})$. Using the reparameterization trick \cite{kingma2013auto} latent codes can be generated as $z_p^{\psi} = \mu_p^{\psi} + \sigma_p^{\psi} \odot \epsilon(p)$ with $\epsilon \sim \TND{M}$.  

\paragraph*{FL-VAE Decoder}
The principal advantage in a tangent vector latent representation is that it naturally admits a descriptive coordinate function which can be queried to associate features with neighboring points. The decoder \dec{} is designed as a neural field operating over the local parameterization of the surface induced by the logarithm map about a point $p$, $\log_p: \neigh{p} \rightarrow \C_p$,  with $\log_{p}q$ encoding the position of $q \in \neigh{p}$ in the tangent space of $p$. Formally,
\begin{align}
\begin{aligned}
\dec_{p}: \log_p\left(\neigh{p}\right) \times \C_p^d & \rightarrow \mathbb{R}^n \\ %
\left(\log_p q, z_p^{\psi}\right) &\mapsto \widehat{\psi}_p(q), 
\end{aligned} \label{decoder}
\end{align}
with $\widehat{\psi}_p$ denoting the prediction of $\psi$ made from the perspective of $p$. Decoding follows a similar process as proposed in \cite{gardner2022rotation}. Specifically, given a tuple $\left(\log_p q, z_p^{\psi}\right)$, the decoder constructs two features. First, an invariant scalar feature is obtained by vectorizing the upper-triangular part of the Hermitian outer product of the latent code with itself, $\textrm{vec}_{j \geq i} \left(z_p^{\psi} \left[z_p^{\psi}\right]^{*}\right) \in \mathbb{C}^{\frac{d(d+1)}{2}}$. Second, a positionally-aware scalar feature $c_{pq}^{\psi} \in \mathbb{C}^d$ is formed by the natural coordinate function 
\begin{align}
c_{pq}^{\psi} \equiv \log_pq \cdot \overline{z_p^{\psi}}, \label{direc_features}
\end{align}
which corresponds to storing the inner product and determinant of each tangent vector with the position of $q$ relative to $p$ given by $\log_{p}q$. The neural field receives the concatenation of these two features and returns the prediction $\widehat{\psi}_p(q)$.

\paragraph{Equivariance} Recall we are concerned with constructing a latent representation equivariant under isometries --- that for any $\psi \in \nice{M}{\R^n}$ and isometry $\diff: M \rightarrow N$ we have 
\begin{align}
\widehat{\left[\diff \psi\right]}_{\diff(p)} = \diff \widehat{\psi}_{p}, \label{fl_equiv} 
\end{align}
where  $\diff \psi = \psi \circ \diff^{-1}$ denotes the standard action of diffeomorphisms by left-shifts. 
Thus, it follows from Equation~(\ref{equiv_noise}) that a sufficient condition for Equation~(\ref{fl_equiv}) to hold is if for all $\psi \in \nice{M}{\R^n}$, isometries $\gamma: M \rightarrow N$, and $p \in M$, the encoded mean $\mu_p^{\psi}$ and standard deviation $\sigma_p^{\psi}$ satisfy
\begin{align}
\eval{d\diff}{p} \cdot \mu_p^{\psi} = \mu_{\diff(p)}^{\diff \psi} \qquad \textrm{and} \qquad \sigma_{p}^{\psi} = \sigma_{\diff(p)}^{\diff \psi} \label{fl_equiv_cond}, 
\end{align}
with $\eval{d\diff}{p}$ denoting the push-forward of $\diff$ in the tangent space at $p$, which rotates the local coordinate system. A detailed proof appears in the supplement, sec.~\ref{supp:fl_equiv}.

\paragraph*{Architecture} In practice, we discretize a surface $M$ by a triangle mesh with vertices $V$ and faces $F$. Textures are scalar functions $\psi$ defined continuously over the faces and vertices, taking values in $\R^3$. For each $p \in V$, we consider its neighborhood to be the surrounding one-ring, the collection of faces in $F$ that share $p$ as a vertex. The encoder architecture is chosen so as to satisfy the conditions in Equation~(\ref{fl_equiv_cond}). A fixed number of points $\{ q_i \}$ are uniformly sampled in the one-ring neighborhood at which the texture function is evaluated. The resulting scalar features $\{\psi(q_i)\}$ are interleaved with the logarithms $\{\log_pq_i\}$ to construct tangent vector features which are concatenated with a token following \cite{dosovitskiy2020image}. The features are passed to eight successive VN-Transformer layers \cite{assaad2022vn}, modified to process tangent vector features. Afterwards, the token is extracted and used to predict the mean  and standard deviation. 

The neural field comprising the decoder consists of a five-layer real-valued MLP, to which the real and imaginary parts of the complex feature vector are passed. For any point in a triangle, the latent codes at the adjacent vertices provide three distinct predictions which are linearly blended with barycentric interpolation only at inference. This is in contrast to INF \cite{koestler2022intrinsic}, wherein linearly blended features are passed to the neural field to make predictions.  A comprehensive discussion of the FL-VAE architecture and training objective is provided in section~\ref{fl_architecture} in the supplement.

\section{Field Latent Diffusion Models}
\label{sec:diffusion}
FLDMs define a noising process on surface vector fields, and learn a denoising diffusion probabilistic model (\textbf{DDPM}) \cite{ho2020denoising} using an equivariant surface network. In principle they can be applied to any vector data on surfaces, and here we use them to learn feature distributions in FL space. 

Latent vector fields $Z \in TM^d$ can be produced by compressing a texture $\psi \in \nice{M}{\R^3}$ to a collection of tangent vector features at each point with the FL-VAE encoder as in Equation~(\ref{encoder}) and sampling from the distributions. The forward process progressively adds noise in the tangent space, dictated by a monotonically decreasing schedule $\{\alpha_t\}_{0 \leq t \leq T}$ with $\alpha_0 = 1$ and $\alpha_T = 0$. The noised vector fields can be expressed at an arbitrary time $t$ in the discretized forward process with 
\begin{align}
Z_t = \sqrt{\alpha_t}  \, Z + \sqrt{1 - \alpha_t} \,  \epsilon \in TM^d, \label{diff_forward}
\end{align}
where $\epsilon \sim \TND{M}$ as in Equation~(\ref{tangent_noise}).

The denoising function $\varepsilon$ is a surface network trained to reverse the forward process. Specifically, it is a learnable map on the space of latent vector fields, conditioned on timesteps $t$ and optional scalar embeddings $\rho \in \nice{M}{\R^m}$, with the goal of predicting the added noise,
\begin{align}
\begin{aligned}
\varepsilon: TM^d \times \R_{\geq 0} \times \nice{M}{\R^m} \rightarrow TM^d,
\end{aligned} \label{denoising_network}
\end{align}
trained subject to the loss 
\begin{align}
L_{\textrm{FLDM}} = \E_{\epsilon \sim \TND{M},0 \leq t \leq T} \left\lVert \epsilon - \varepsilon(Z_t, t, \rho) \right\rVert_2^2. 
\label{FLDM_loss}
\end{align}
 
At inference, new latent vector fields are generated following the iterative DDPM denoising process.  Beginning with representing the fully-noised vector fields at step $T$ by a sample $\widetilde{Z} \sim \TND{M}$, $\widetilde{Z}_T = \widetilde{Z}$, the trained denoising function $\varepsilon$ is used to estimate the previous step in the forward process via the relation
\begin{align}
\begin{aligned}
\widetilde{Z}_{t-1} & = C_1(\alpha_t, \alpha_{t-1}) \, \widetilde{Z}_t + C_2(\alpha_t, \alpha_{t-1}) \, \varepsilon(\widetilde{Z}_t, t, \rho)  \\
& + C_3(\alpha_t, \alpha_{t-1}) \, \epsilon,
\end{aligned} \label{reverse_relation}
\end{align}
with $\epsilon = 0$ for $t = 1$ and $\epsilon \sim \TND{M}$ otherwise, and $C_i(\alpha_t, \alpha_{t-1})\in \mathbb{R}$ as defined in \ref{fl_architecture} of the supplement. 

\paragraph{Equivariance}

\begin{figure}[t]
\begin{picture}(\linewidth,0.81\columnwidth)
\put(-7,-5){{\includegraphics[width=1.05\linewidth]{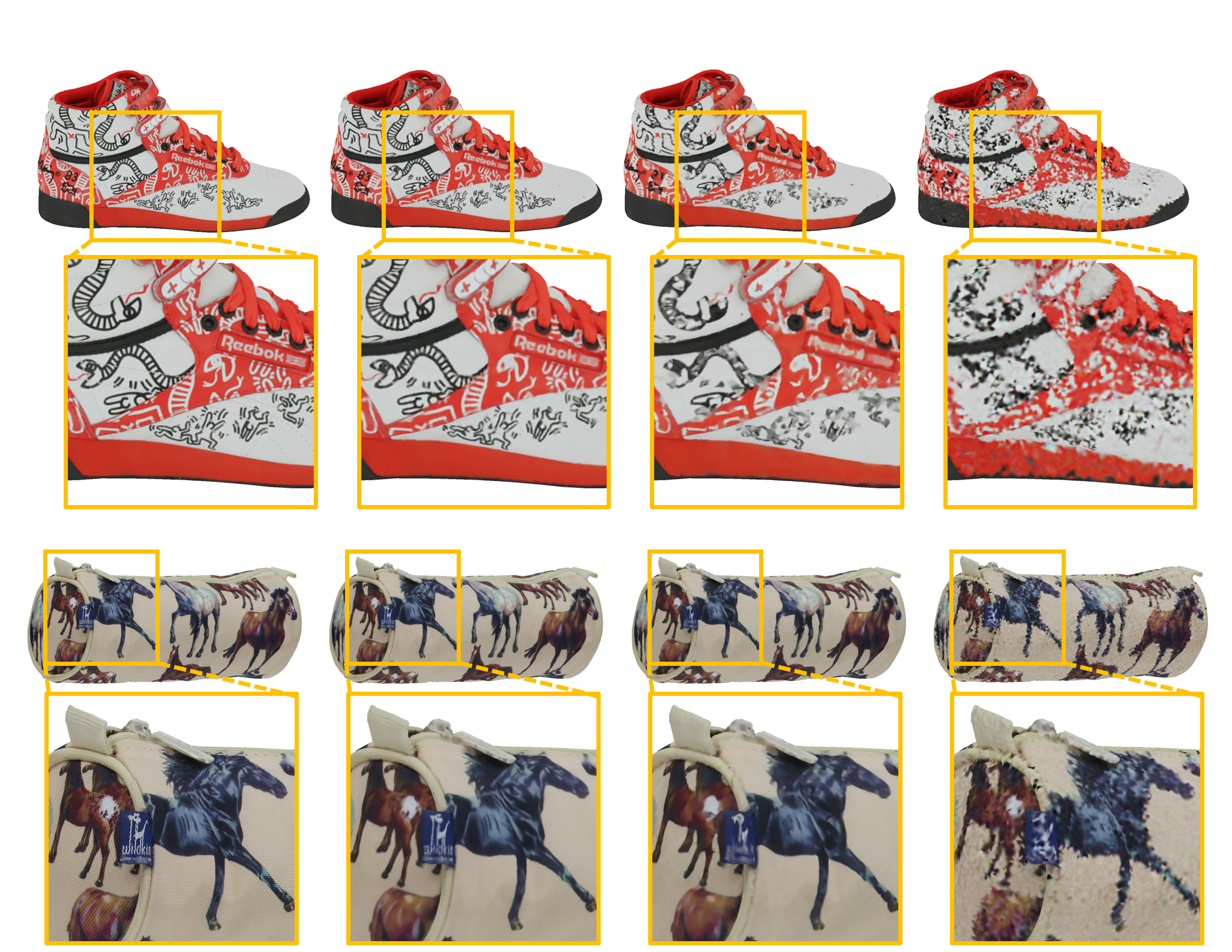}}}
\thicklines
\put(0, 182){ {\small Ground truth}}
\put(1, 178){\line(1,0){54}} 
\put(67, 182){ {\small \textbf{FL-VAE}}}
\put(59, 178){\line(1,0){54}} 
\put(118, 182){{\small FL-VAE (Bary.)}}
\put(120, 178){\line(1,0){54}} 
\put(190, 182){{\small INFs \cite{koestler2022intrinsic}}}
\put(179, 178){\line(1,0){54}} 
\end{picture}
\caption{
Visual comparison of textures compressed and reconstructed with the FL-VAE and INF \cite{koestler2022intrinsic} on $30$K vertex meshes. Compared to barycentric coordinates, our proposed logarithmic coordinate function more richly extends latent features across the mesh, enabling the reconstruction of finer details. \textit{Zoom in to view.}}
\vspace{-.2in}
\label{tex_recon_fig}
\end{figure}

\begin{table}
\centering
\resizebox{0.85\columnwidth}{!}{
    \begin{tabular}{lccc}
    Method & PSNR $\uparrow$ & DSSIM $\downarrow$ & LPIPS $\downarrow$ \\ \hline \hline
    \multicolumn{1}{l|}{\textbf{FL-VAE}}         & \multicolumn{1}{c|}{22.38 (20.59)} & \multicolumn{1}{c|}{0.51 (0.83)} & \multicolumn{1}{c}{1.02 (1.81)}   \\
    \multicolumn{1}{l|}{FL-VAE (Bary.)} & \multicolumn{1}{c|}{21.33 (19.61)} & \multicolumn{1}{c|}{0.66 (1.01)} & \multicolumn{1}{c}{1.31 (2.03)}  \\
    \hline  
    \multicolumn{1}{l|}{INFs \cite{koestler2022intrinsic}} & \multicolumn{1}{c|}{18.86 (16.45)} & \multicolumn{1}{c|}{1.16 (1.64)} & \multicolumn{1}{c}{2.15 (2.73)}
    \end{tabular}
}
\vspace{-.1in}
\caption{Texture reconstruction quality on meshes from the Google Scanned Objects dataset \cite{downs2022google}. Reconstructions are evaluated on high-res ($30$K vertices) and low-res ($5$K vertices, in parenthesis) meshings of the objects.  Metrics are computed in the texture atlases, with DSSIM and LPIPS scaled by a factor of $10$.}
\label{recon_results_table}
\vspace{-.2in}
\end{table}

\begin{figure*}[!ht]
\begin{picture}(\linewidth,0.5\columnwidth)
\put(0,0){{\includegraphics[width=\linewidth]{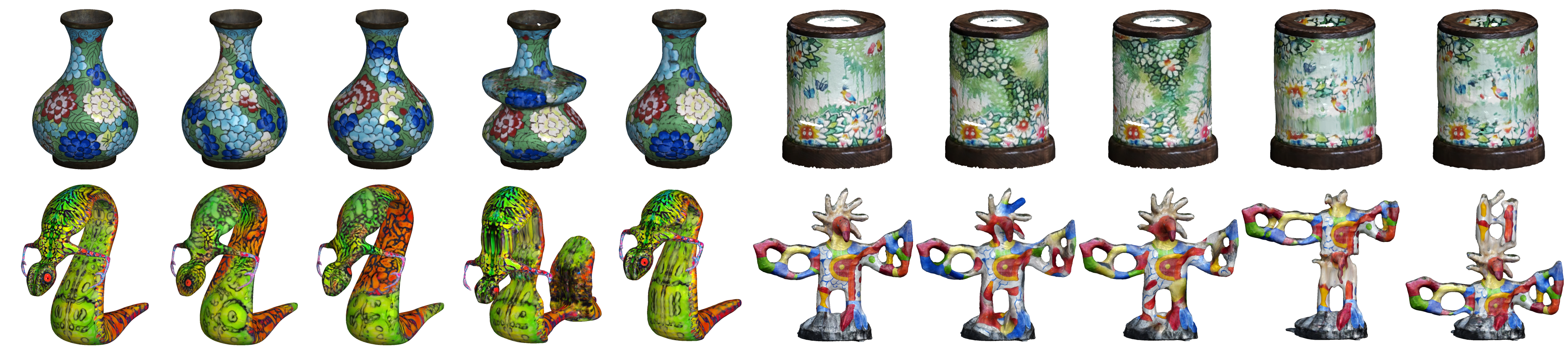}}}
\thicklines
\put(15, 118){ {\small Input}}
\put(9, 114){\line(1,0){38}} 
\put(81, 118){ {\small \textbf{FLDM}}}
\put(59, 114){\line(1,0){78}} 
\put(169, 118){{\small Sin3DM \cite{wu2023sin3dm}}}
\put(153, 114){\line(1,0){78}} 
\put(252, 118){ {\small Input}}
\put(245, 114){\line(1,0){40}} 
\put(326, 118){ {\small \textbf{FLDM}}}
\put(302, 114){\line(1,0){82}} 
\put(421, 118){{\small Sin3DM \cite{wu2023sin3dm}}}
\put(405, 114){\line(1,0){82}} 
\end{picture}
\caption{ Visual comparison of unconditionally (label-free) generated textures from FLDMs and Sin3DM \cite{wu2023sin3dm}.  Our FLDM's isometry-equivariant construction allows for replication of textural details across locally similar regions. In contrast, Sin3DM's extrinsic approach associates textural features with 3D space; Synthesized details appear as repetitions or extrusions of previous patterns along the major axes of the mesh, and we observe that novel textures cannot be created without modifying geometry. \textit{Zoom in to compare.} }
\label{unc_gen_fig}
\end{figure*}

From the equivalence of the distributions $\TND{M}$ and $\TND{N}$ under isometries as in Equation~(\ref{equiv_noise}), it follows that for any isometry $\gamma: M \rightarrow N$,  a sufficient condition for \textit{both} the forward and reverse processes in Equations~(\ref{diff_forward})~and~(\ref{reverse_relation}) to satisfy 
\begin{align} 
\diff Z_t = \left[\diff Z\right]_t \qquad \textrm{and} \qquad \diff \widetilde{Z}_t = \big[\diff \widetilde{Z}\big]_t, \label{diff_equiv}
\end{align}
for $0 \leq t \leq T$ is if the denoising network $\varepsilon$ in Equation~(\ref{denoising_network}) has the property that for any vector field $Z \in TM^d$,
\begin{align}
\diff \varepsilon(Z, t, \rho) = \varepsilon( \gamma Z, t, \gamma \rho). \label{diff_equiv_cond}
\end{align}
A detailed proof is provided in sec.~\ref{supp:fldm_equiv} of the supplement.

\paragraph{Denoising with Field Convolutions}
To satisfy the condition in Equation~(\ref{diff_equiv}), our denoising network is constructed with field convolutions (FC)~\cite{mitchel2021field}, which belong to a class of models that can operate on tangent vector fields ~
\cite{Wiersma2020, wiersma2022deltaconv}.

FCs convolve vector fields $Z \in TM^C$ with $C' \times C$ filter banks $\vn{f}_{c'c} \in \nice{\C}{\C}$, returning vector fields $Z * \vn{f} \in TM^{C'}$. In conventional diffusion models, features inside the denoising network are conditioned on timesteps $t$ via an additive embedding applied inside the convolution blocks. Unfortunately, no direct analogue exists for vector field features on surfaces --- additive embeddings break equivariance and we find that multiplicative embeddings destabilize training. Thus, to enable conditioning of our denoising network $\varepsilon$ on both timesteps $t$ and optional user-input features $\rho \in \nice{M}{\R^m}$, we inject embeddings directly into convolutions by extending FCs to convolve vector fields with filters $\vn{f}_{c'c} \in \nice{\C \times \R^e}{\C}$, with $e$ the embedding dimension.  Critically, this formulation preserves equivariance without sacrificing stability. 

Following \cite{wang2022sindiffusion}, the denoising network $\varepsilon$ takes the form of a shallow two-level U-Net but with field convolutions; the relatively small size of the network's receptive field allows distributions of latent features to be learned without overfitting to the single textural example \cite{wu2023sin3dm, kulikov2023sinddm, wang2022sindiffusion}.  

\section{Experiments}
\label{sec:experiments}

\begin{table}{
    \centering
    \begin{picture}(\linewidth,0.25\columnwidth)
    \thicklines
    \put(0, 33){
    \resizebox{0.55\columnwidth}{!}
    {    
    \begin{tabular}{lcc}
    Method & SIFID $\downarrow$ & LPIPS $\uparrow$  \\ \hline \hline
    \multicolumn{1}{l|}{\textbf{FLDM}}         & \multicolumn{1}{c|}{3.27} & \multicolumn{1}{c}{1.15}   \\ 
    \multicolumn{1}{l|}{Sample (left)}         & \multicolumn{1}{c|}{0.71} & \multicolumn{1}{c}{0.94} \\
    \hline
    \multicolumn{1}{l|}{Sin3DM \cite{wu2023sin3dm}} & \multicolumn{1}{c|}{6.58} & \multicolumn{1}{c}{2.20} \\
    \multicolumn{1}{l|}{Sample (right)} & \multicolumn{1}{c|}{2.42} & \multicolumn{1}{c}{2.91}
    \end{tabular}
    }}
    \put(145,0){{\includegraphics[width=0.35\linewidth]{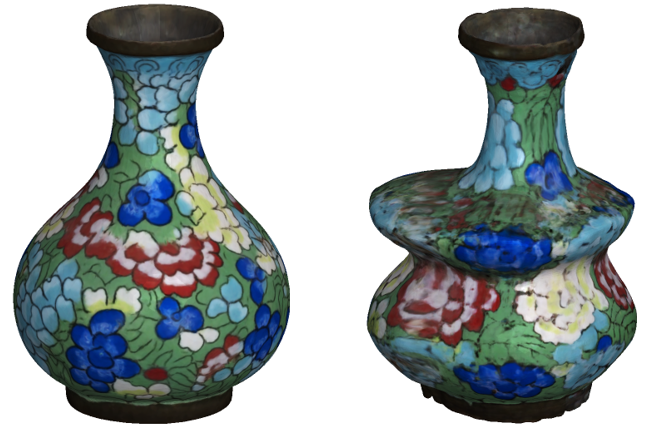}}}
    \put(149, 60){ {\small \textbf{FLDM}}}
    \put(148, 57){\line(1,0){34}} 
    \put(192, 60){ {\small Sin3DM }}
    \put(193, 57){\line(1,0){34}} 
    \end{picture}
    \caption{Fidelity and diversity of unconditionally generated textures trained on selected Objaverse \cite{deitke2022objaverse} and Scanned Objects \cite{downs2022google} meshes. SIFID and LPIPS are scaled by factors of $10^5$ and $10$, respectively. Unlike our FLDMs, Sin3DM synthesizes new geometry which  can inflate diversity scores. As an example, the individual metrics are shown for the FLDM and Sin3DM samples on the right. The latter attains a high LPIPS score, but exhibits poor textural diversity and quality. }
    \label{generation_results_table}
    } 
    \vspace{-.21in}
\end{table}

%


\paragraph*{Pre-training the FL-VAE} It is important to note that the FL-VAE proposed in Section~\ref{sec:latents} only interfaces with the mesh geometry through its local flattening in the image of the logarithm map. Thus, the FL-VAE operates independently of the local curvature, and a model pre-trained' on flat textured meshes can be used to encode and decode textures on arbitrary 3D shapes. 

In all of our experiments we use the same FL-VAE with a $d=8$ dimensional latent space, pre-trained on the OpenImagesV4 dataset \cite{kuznetsova2020open}. Specifically, we generate $1$K different planar meshes containing $10$K vertices each.
Each sample $512\times512$ image is overlaid by a randomly chosen triangulation, with the image serving as the ``texture'' defined over the mesh.    The FL-VAE is then trained with the sum of reconstruction loss and  KL divergence using the Adam \cite{kingma2017adam} optimizer. We deliberately train the FL-VAE on coarser triangulations than we expect during inference on textured meshes. This forces the model to reconstruct high frequencies during training, which we find increases robustness to sampling density and triangulation quality, in addition to improving the fidelity of reconstructed FLDM samples. 

\paragraph*{Training single-textured-mesh FLDMs}
All diffusion experiments share the same training regime.
During preprocessing, we create 500 copies of a given mesh each with 30K vertices and different triangulations to prevent the denoising network from learning the connectivity. Each copy shares the same UV-atlas as the original and the texture is mapped to vector field latent features at the vertices using the pre-trained FL-VAE encoder. FLDMs have a max timestep of $T = 1000$ and noise is added on a linear schedule. The denoising network is trained subject to the loss in Equation~(\ref{FLDM_loss}) using the Adam \cite{kingma2017adam} optimizer. 

\subsection{Texture Compression and Reconstruction}
\label{sec:eval_recon}

First we quantitatively evaluate the capacity of our proposed FL-VAE for compressing and reconstructing textures. We compare FL-VAEs against two other approaches: the state-of-the-art INF \cite{koestler2022intrinsic} and a modified version of FL-VAEs serving as an ablation. INFs train a neural field per texture and predicts the texture values using the values of the Laplacian eigenfunctions extended over triangles through barycentric interpolation. Similarly, the modified FL-VAE (denoted FL-VAE Barycentric) omits our proposed logarithmic coordinate function in Equation~(\ref{direc_features}) in favor of linearly interpolating the values of the invariant outer product features across triangles. It is otherwise identical to our proposed approach, including pre-training methodology. Thus, this also serves to compare the efficacy of our proposed coordinate function against barycentric interpolation in extending vertex features continuously over the mesh.

To evaluate, we select $16$ objects exhibiting a range of complex textures with high-frequency details from the Google Scanned Objects dataset~\cite{downs2022google}. Two versions of the dataset are used, with objects remeshed to have $30$K (high-res) and $5$K (low-res) vertices, respectively. The single pre-trained FL-VAE is used to map the textures to distributions over the tangent space, and the mean is used to sample the reconstructed texture at the pixel indices in the texture atlas. A separate INF neural field is trained for each mesh, using the specified regime for high-res reconstruction \cite{koestler2022intrinsic}.

Following INFs, we evaluate the quality of the reconstructed textures using the average PSNR, DSSIM \cite{wang2004image}, and LPIPS \cite{zhang2018unreasonable} metrics computed in texture atlases. The results are reported in Table~\ref{recon_results_table}, with several examples of reconstructions shown in Figure~\ref{tex_recon_fig}.  Our proposed FL-VAE achieves the best results across all metrics at both resolutions, followed by the barycentric FL-VAE and INFs. The significant gap in performance between the barycentric FL-VAE and INFs is likely due to the reliance of the latter the on the Laplacian eigenfunctions as input features. Areas of fine textural detail may not necessarily overlap with regions where the eigenfunctions exhibit high variability, potentially contributing to the ``raggedness'' observed in INF reconstructions.  Furthermore, the superior performance of our proposed FL-VAE relative to its barycentric counterpart suggests that our logarithmic coordinate function in Equation~(\ref{direc_features}) provides a richer interpolant over the mesh faces. This directly increases the representational capacity of the latent features, enabling the reconstruction of finer textural details with the same number of features.

\begin{figure}[t]
\begin{picture}(\linewidth,0.28\columnwidth)
\put(0,0){{\includegraphics[width=\linewidth]{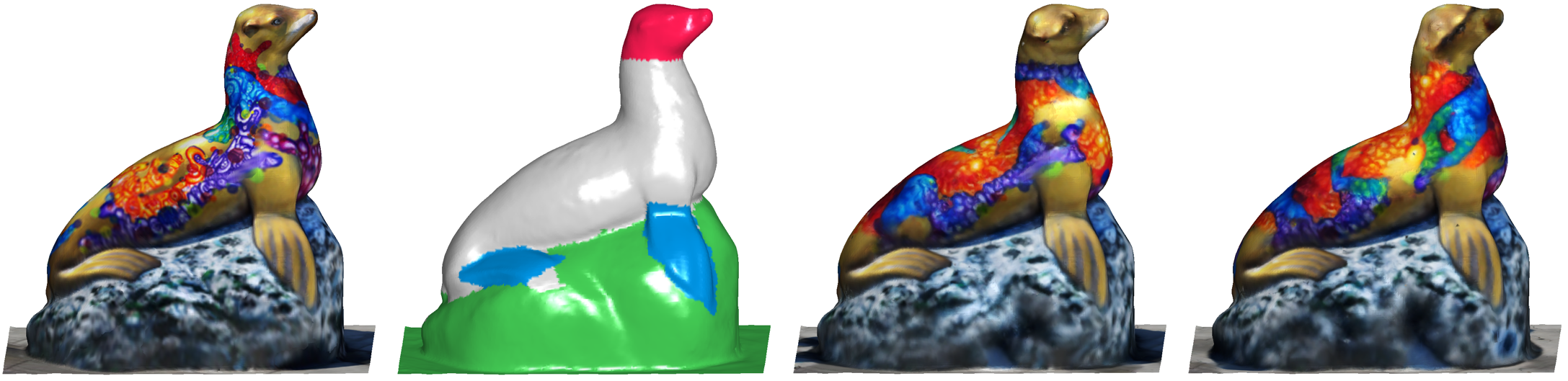}}}
\thicklines
\put(15, 68){ {\small Input}}
\put(0, 64){\line(1,0){55}} 
\put(67, 68){ {\small Labels ($\rho$)}}\put(60, 64){\line(1,0){55}}
\put(150, 68){{\small FLDM samples}}
\put(121, 64){\line(1,0){112}}
\end{picture}
\caption{Label-guided generation. FLDMs can be conditioned on a subjective user-input labeling which generated textures will reflect. See Figure~\ref{teaser_fig} for more examples. \textit{Zoom in to view}. }
\vspace{-.2in}
\label{label_guided_fig}
\end{figure}

\subsection{Unconditional (Label-Free) Generation}
\label{sec:unc_gen}
Label-free generation can be performed by training and sampling FLDMs conditioned only on the diffusion timescale (\textit{i.e.} $\rho$ = 0). We compare against Sin3DM \cite{wu2023sin3dm} which to our knowledge is the only prior generative model designed to operate in a single-textured-mesh paradigm. We follow their proposed evaluation protocol, and train FLDMs and Sin3DM on $10$ assets from the Objaverse \cite{deitke2022objaverse} and Scanned Objects \cite{downs2022google} datasets selected for texture complexity and diversity. For each asset, $64$ textures are sampled from the trained models, which are rendered on the mesh from $24$ different views. Fidelity is measured with the mean SIFID (Single Image Fr\'echet Inception Distance) \cite{shaham2019singan} pairwise between the renderings and those of the input mesh from the same view; the mean LPIPS \cite{zhang2018unreasonable} between renders of samples from the same viewpoint measures diversity. We note that unlike FLDMs, which only synthesize new textures, Sin3DM synthesizes both textures \textit{and} geometries, with the latter potentially inflating LPIPS scores.

The results are shown in Table~\ref{generation_results_table}, Figure~\ref{unc_gen_fig}, and in the supplement (Figure~\ref{unc_gen_supp_fig}).
Compared to Sin3DM, our FLDMs are able to synthesize higher-fidelity textures which is likely due in part to the former's rasterization of textured geometry to a 3D grid before encoding to the latent space, potentially aliasing fine details. More generally, we observe that by operating over an \textit{extrinsic} representation of textured geometry, Sin3DM intertwines texture with a mesh's 3D embedding. Thus, new textural details appear as repetitions or extrusions of earlier patterns along the major axes of the models, as seen in the vase, snake, and sculpture samples in Figure~\ref{unc_gen_fig}. Furthermore, we observe that texture and position are so strongly linked that Sin3DM is unable to produce significant textural alterations without correspondingly modifying the geometry, as seen in the vase, snake, and lantern samples. Here, the advantage of our \textit{isometry-equivariant} formulation becomes clear as our FLDMs can effectively replicate textural details across areas of the mesh that are locally similar, regardless of the relative configuration of these regions in the 3D embedding.

\begin{figure}[t]
\begin{picture}(\linewidth,0.28\columnwidth)
\put(0,0){{\includegraphics[width=\linewidth]{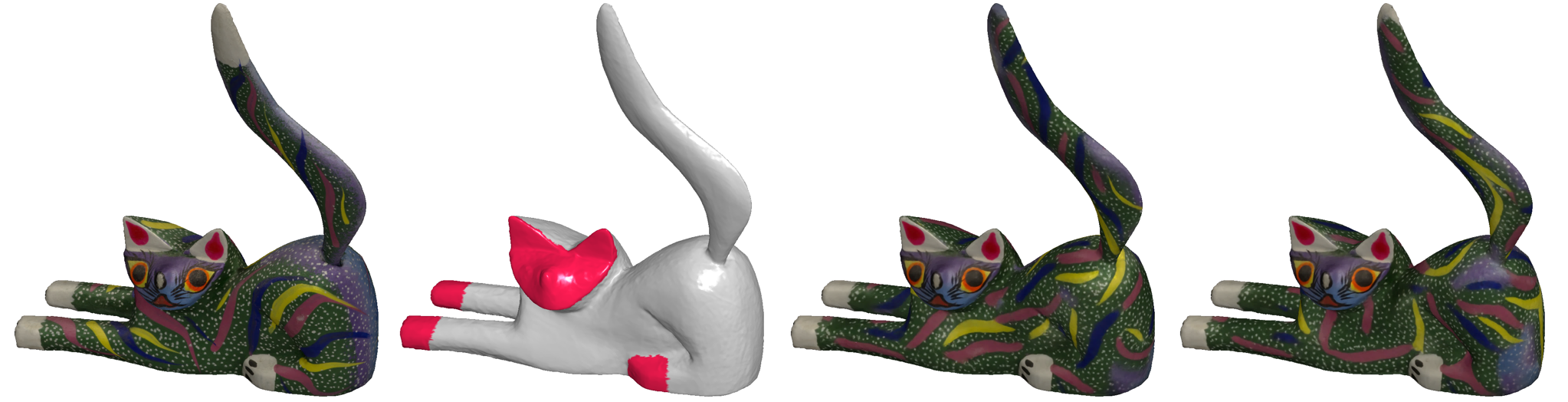}}}
\thicklines
\put(17, 68){ {\small Input}}
\put(2, 64){\line(1,0){53}} 
\put(75, 68){ {\small Mask}}
\put(62, 64){\line(1,0){51}} 
\put(150, 68){{\small FLDM samples}}
\put(121, 64){\line(1,0){112}}
\end{picture}
  \caption{Inpainting with FLDMs. The input texture is preserved in the masked regions and new content is synthesized elsewhere with agreement on the boundaries. \textit{Zoom in to view}.}
\vspace{-.20in}
\label{inpaint_fig}
\end{figure}

\subsection{Label-Guided Generation}
\label{sec:label_guided}

Textured objects often exhibit distinct content specific to certain regions of the mesh which may not be geometrically unique; for example, in Figure~\ref{teaser_fig} regions on the sole, body, and inside of the shoe lack distinguishing geometric features, as is the case with the eyes and mouth of the skull and the decal on the bottle. In such cases, a user may find it desirable for synthesized textures to reflect a subjective distribution of content on the input mesh, which we facilitate by extending our denoising model to incorporate optional conditioning from user-designated mesh features $\rho \in \nice{M}{\R^m}$. In the simplest case, the user can paint a coarse, semantic labeling to subjectively delineate salient regions, such as the flippers, head, and rock base of the seal sculpture in Figure~\ref{label_guided_fig}. After training an FLDM conditioned on the labels, generated textures reflect the specified segmentation. Further examples are shown in Figure~\ref{teaser_fig}.  Labeled areas contain different textural features, and conditioning ensures the FLDM samples respect these distributions, \textit{e.g.} that teeth are synthesized only on the mouth.

\begin{figure}[t]
\begin{picture}(\linewidth,1.05\columnwidth)
\put(0,0){{\includegraphics[width=\linewidth]{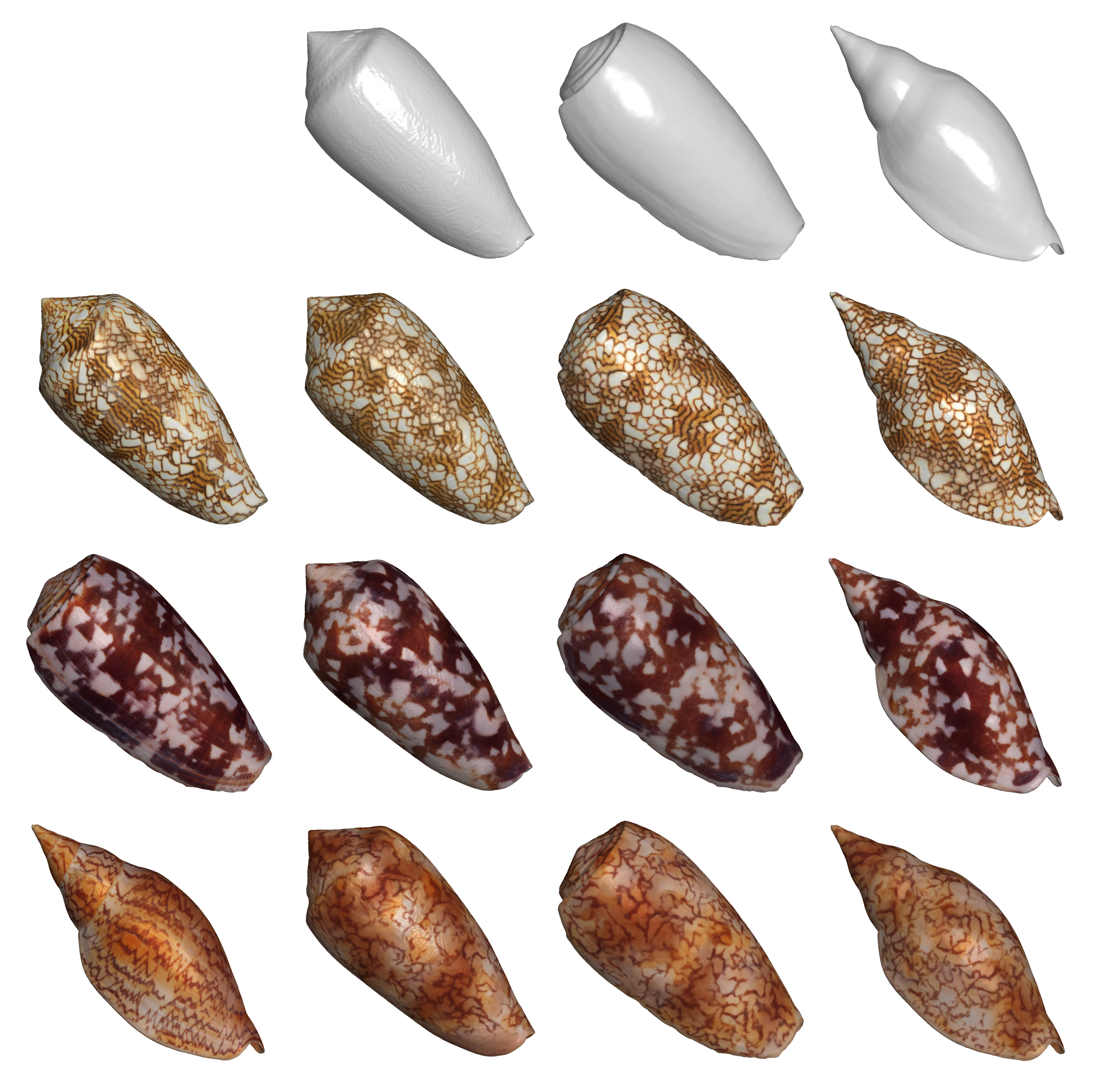}}}
\thicklines
\put(18, 183){ {\small Input}}
\put(7, 179){\line(1,0){48}} 
\put(88, 243){{\small FLDM samples across geometries}}
\put(68, 239){\line(1,0){160}} 
\end{picture}
\caption{Generative texture transfer. Since our FL-VAE and FLDMs commute with isometries, we can encode a texture and train an FLDM on an input mesh (left), then sample the pre-trained FLDM on a new, similar mesh and decode the latents to texture it (center to right). Conditioning labels segmenting the shell apertures are not visible. \textit{Zoom in to view.}}
\vspace{-.2in}
\label{transfer_fig}
\end{figure}

\subsection{Inpainting}
In certain cases a user may wish to preserve the input texture in certain regions while synthesizing new content elsewhere.   Texture inpainting is a well studied topic \cite{fayer2018texturing, grigorev2019coordinate, maggiordomo2023texture}, and here we propose a straight-forward approach with FLDMs. Given a pre-trained FLDM conditioned on a user-input binary mask $m: M \rightarrow \{0, 1\}$ specifying the regions to preserve (Figure~\ref{inpaint_fig}), generative inpainting can be performed by modifying the iterative sampling process. Specifically, after estimating previous the latent features $\widetilde{Z}_{t-1}$ as in Equation~(\ref{reverse_relation}), the features in the mask region are replaced with the appropriately noised input latents from the forward process $Z_{t-1}$ such that
\begin{align}
\widetilde{Z}_{t-1} \mapsto m \cdot Z_{t-1} + (1 - m) \cdot \widetilde{Z}_{t-1}. \label{inpaint_step}
\end{align}
This approach encourages the inpainted texture to agree with the original at the mask boundaries due to convolutional structure of the denoising network, with this effect observed on the hind leg of sampled textures in Figure~\ref{inpaint_fig}.

\subsection{Generative Texture Transfer}
\begin{figure}[t]
\begin{picture}(\linewidth,0.27\columnwidth)
\put(0,0){{\includegraphics[width=\linewidth]{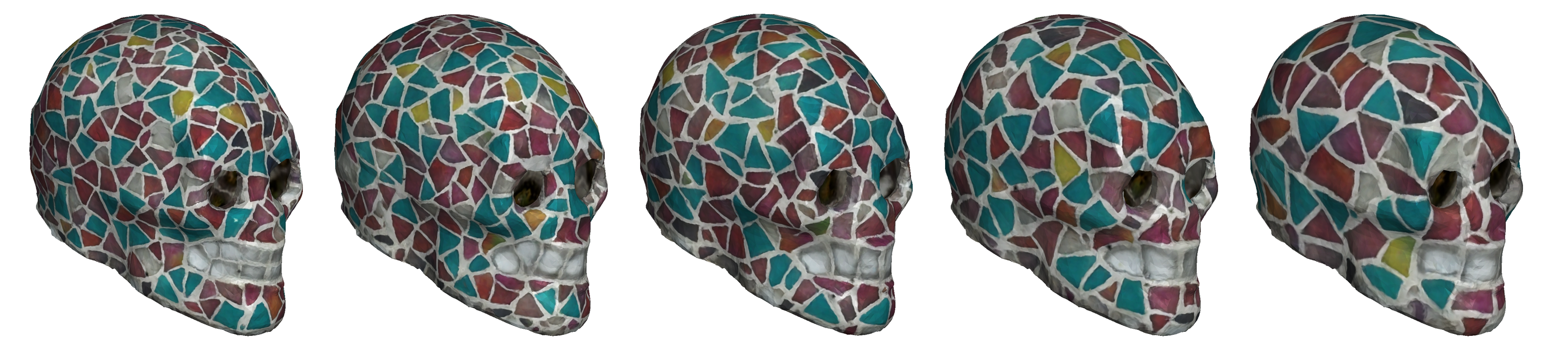}}}
\thicklines
\put(22, 60){ {\small FLDM samples on increasingly coarse remeshings}}
\put(10, 56){\vector(1,0){215}} 
\end{picture}
\caption{ Controlling the scale of synthesized textures.  Here, the FLDM trained on the textured skull in the center-left of  Figure~\ref{teaser_fig} is sampled on progressively coarser remeshings of a new skull geometry, dilating the size of the tesserae. \textit{Zoom in to view details.} }
\label{scale_fig}
\vspace{-.2in}
\end{figure}

Perhaps the most dramatic consequence of isometry-equivariance is that it naturally facilitates a notion of generative texture transfer.  Since the FL-VAE and FLDMs commute with isometries $\diff: M \rightarrow N$ as in Equations~(\ref{fl_equiv})~and~(\ref{diff_equiv}), our models see $TM^d$ and $TN^d$ as functionally the same space. In practice, we find that long as surfaces $M$ and $N$ are \textit{approximately} isometric, we can encode a texture and train an FLDM on $M$, then sample the model on $N$ and decode the synthesized latents to texture it.  

Several examples are shown in Figure~\ref{transfer_fig}. An FLDM is pre-trained with label conditioning on each of the textured shells in the first column and sampled with corresponding labels on the others to texture them in the same style. Additional examples are shown on the right side of Figure~\ref{teaser_fig}. Notably, our model successfully transfers between different topologies, as the skull meshes are genus zero and three. More generally, we are able to transfer textures between highly dissimilar shapes (see supplement for examples).

\subsection{Controlling Textural Scale}
An additional feature of our FLDMs is that they offer user-control over the scale of sampled textures. Internally, FLDMs normalize the filter support by dividing the distances between adjacent vertices by the radius of the mean vertex area element $\widetilde{r} = \sqrt{A / (\pi \abs{V})}$, where $A$ is the surface area of the mesh and $\abs{V}$ is the number of vertices. Thus, the scale of the synthesized textural details can be controlled by increasing or decreasing the resolution of the mesh upon which the pre-trained FLDM is sampled.  An example shown in Figure~\ref{scale_fig}, where the FLDM trained on the textured skull on the center left of Figure~\ref{teaser_fig} is sampled on progressively coarser remeshings of a new skull geometry.  As the number of vertices decreases, the average one-ring occupies an increasingly larger fraction of the surface area, dilating the size of the synthesized textural details.

\section{Conclusion and Limitations}
We present an original framework for latent diffusion models operating directly on the surfaces to generate high-quality textures. Our approach consists of two main contributions: a novel latent representation mapping textures to vector fields called \textit{Field Latents} and \textit{Field Latent Diffusion Models} which learn to denoise a diffusion process in the latent space on the surface. We apply our method in a \textit{single-textured-mesh paradigm}, and generate variations of textures with state-of-the-art fidelity. Limitations of our approach include the relatively long training time of FLDMs and an inability to reflect directional information in synthesized textures. The latter could potentially be addressed by conditioning FLDMs on user-specified vector fields, with embeddings formed via dot products with network features.

{
    \small
    \bibliographystyle{ieeenat_fullname}
    \bibliography{main}
}

\clearpage
\setcounter{page}{1}
\maketitlesupplementary

\section{Equivariance}
Here we provide detailed proofs of the claims made in Sections~\ref{sec:latents}~and~\ref{sec:diffusion} regarding sufficient conditions for isometry-equivariance. 

\subsection{Field Latents}
\label{supp:fl_equiv}
Consider the FL variational autoencoder consisting of the encoder $\enc$ and decoder $\dec$ defined pointwise as in Equations~(\ref{encoder}) and (\ref{decoder}), respectively. 

\begin{claim}
If for all $\psi \in \nice{M}{\R^n}$, isometries $\diff: M \rightarrow N$, and points $p \in M$, the encoded mean and standard deviation satisfy the condition of Equation~(\ref{fl_equiv_cond})
\begin{align}
\eval{d\diff}{p} \cdot \mu_p^{\psi} = \mu_{\diff(p)}^{\diff \psi} \qquad \textrm{and} \qquad \sigma_{p}^{\psi} = \sigma_{\diff(p)}^{\diff \psi}, \nonumber 
\end{align}
then the FL-VAE commutes with isometries such that 
\begin{align}
\widehat{\left[\diff \psi\right]}_{\diff(p)} = \diff \widehat{\psi}_{p}. \nonumber
\end{align}
\end{claim}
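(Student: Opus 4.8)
The plan is to verify the identity pointwise: it suffices to show $\widehat{[\diff\psi]}_{\diff(p)}(\diff(q)) = \widehat{\psi}_p(q)$ for every $q \in \neigh{p}$, since an isometry carries $\neigh{p}$ onto $\neigh{\diff(p)}$ and this is exactly the statement $\widehat{[\diff\psi]}_{\diff(p)} = \widehat{\psi}_p \circ \diff^{-1} = \diff\widehat{\psi}_p$ of Equation~(\ref{fl_equiv}). The first step is to track how the sampled latent code transforms. Writing $R \equiv \eval{d\diff}{p}$ for the pushforward at $p$, which (being an orientation-preserving tangent isometry) acts on each of the $d$ copies of $\C_p$ as multiplication by a single unit complex number, I would combine the encoder hypothesis in Equation~(\ref{fl_equiv_cond}) with the reparameterization $z_p^{\psi} = \mu_p^{\psi} + \sigma_p^{\psi} \odot \epsilon(p)$ and the noise coupling of Equation~(\ref{equiv_noise}). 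Using that the scalar $\sigma_p^{\psi}$ commutes with the rotation $R$, this yields $z_{\diff(p)}^{\diff\psi} = R\, z_p^{\psi}$; that is, the code rotates by $R$ under the isometry.

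The second step is to show that the two scalar features the decoder passes to its neural field are both invariant under $z \mapsto R z$. For the Hermitian outer-product feature $\textrm{vec}_{j \geq i}(z_p^{\psi}[z_p^{\psi}]^{*})$, the common phase introduced by $R$ on every component cancels between $z$ and its conjugate, so the full matrix $z z^{*}$ is unchanged. For the positionally-aware feature $c_{pq}^{\psi} = \log_p q \cdot \overline{z_p^{\psi}}$ of Equation~(\ref{direc_features}), I would invoke the standard fact that isometries intertwine the logarithm map, $\log_{\diff(p)}\diff(q) = R\,\log_p q$; the phase from this rotation then exactly cancels the conjugate phase from $\overline{z_{\diff(p)}^{\diff\psi}} = \overline{R}\,\overline{z_p^{\psi}}$, leaving $c_{\diff(p)\diff(q)}^{\diff\psi} = c_{pq}^{\psi}$.

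With both inputs to the neural field invariant, its output is identical at the corresponding arguments, giving $\widehat{[\diff\psi]}_{\diff(p)}(\diff(q)) = \widehat{\psi}_p(q)$ and hence the claim. I expect the main obstacle to lie not in the algebra but in the careful handling of the stochastic latent: one must state precisely that equivariance holds for \emph{coupled} noise realizations — the sample on $N$ is obtained by pushing forward the sample on $M$ through Equation~(\ref{equiv_noise}) — and verify that the scalar $\sigma_p^{\psi}$ genuinely commutes with $R$ so that the reparameterized code rotates cleanly as a single factor of $R$. The two feature invariances, by contrast, each reduce to the cancellation of one global phase once the $\log$-map intertwining relation is established.
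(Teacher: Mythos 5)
Your proposal is correct and follows essentially the same route as the paper's proof: establish $z_{\diff(p)}^{\diff\psi} = \eval{d\diff}{p}\cdot z_p^{\psi}$ from the encoder condition, the reparameterization trick, and the coupled-noise relation of Equation~(\ref{equiv_noise}), then show both decoder inputs (the Hermitian outer product and the coordinate feature $c_{pq}^{\psi}$) are invariant via the intertwining $\log_{\diff(p)}\diff(q) = \eval{d\diff}{p}\cdot\log_p q$ and phase cancellation. The subtlety you flag about coupling the noise realizations across $M$ and $N$ is exactly how the paper handles the stochastic latent.
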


\begin{proof}
First, we observe that for any isometry $\diff: M \rightarrow N$ and point $p \in M$, the differential $\eval{d\diff}{p}$ is an orthogonal transformation taking vectors in $T_pM$ to $T_{\diff(p)}N$. Thus, within our complexification of the tangent space, $\eval{d\diff}{p}$ is an element of $\textrm{U}(1)$, the group of complex numbers with unit modulus, acting multiplicatively on $\C_p$.

Now, consider any $\psi \in \nice{M}{\R^n}$, isometry $\diff: M \rightarrow N$, point $p \in M$, and suppose the encoded mean $\mu_p^{\psi}$ and standard deviation $\sigma_{p}^{\psi}$ satisfy the condition. Then, we can relate latent codes distributed in $T_{\diff(p)}N$ to those in $T_{p}M$ with 
\begin{align}
z_{\diff(p)}^{\diff \psi} & \stackrel{\phantom{(\ref{equiv_noise})}}{=} \mu_{\diff(p)}^{\diff \psi} + \sigma_{\diff(p)}^{\diff \psi} \odot \epsilon'(\diff(p)), \quad \epsilon' \sim \TND{N}  \nonumber \\ 
&\stackrel{(\ref{equiv_noise})}{=}\mu_{\diff(p)}^{\diff \psi} + \eval{d \diff}{p} \cdot \sigma_{\diff(p)}^{\diff \psi} \odot \epsilon(p), \quad \epsilon \sim \TND{M} \nonumber \\
& \stackrel{(\ref{fl_equiv_cond})}{=} \eval{d \diff}{p} \cdot \mu_{p}^{\psi} + \eval{d \diff}{p} \cdot \sigma_{p}^{\psi} \odot \epsilon(p) \nonumber \\
&\stackrel{\phantom{(\ref{fl_equiv_cond})}}{=} \eval{d \diff}{p} \cdot z_{p}^{\psi} \label{latent_vector_equiv}
\end{align}
Using the relationship between the latent codes and the fact that $\log_{\diff(p)}\diff(q) = \eval{d \diff}{p} \cdot \log_p q$ for $q \in \neigh{p}$ as long as $\diff$ is an isometry, it follows that the coordinate function in Equation~(\ref{direc_features}) is invariant with 
\begin{align}
c_{\diff(p)\diff(q)}^{\diff \psi} & \stackrel{(\ref{direc_features})}{=} \log_{\diff(p)}\diff(q) \cdot \overline{z_{\diff(p)}^{\diff \psi}} \nonumber \\ 
& \stackrel{(\ref{latent_vector_equiv})}{=} \eval{d\diff}{p} \cdot \log_{p}q \cdot \overline{\eval{d \diff}{p}} \cdot \overline{z_{p}^{\psi}} \nonumber \\
& \stackrel{(\ref{direc_features})}{=} c_{pq}^{\psi} \label{direc_invar},
\end{align}
with the last equality following from the fact that $\eval{d\diff}{p} \cdot \overline{\eval{d \diff}{p}} = 1$. Similarly, it follows that 
\begin{align}
z_{\diff(p)}^{\diff \psi} \left[z_{\diff(p)}^{\diff \psi}\right]^{*}   & \stackrel{(\ref{latent_vector_equiv})}{=}  \eval{d \diff}{p} \cdot \overline{\eval{d\diff}{p}} \cdot z_{p}^{\psi} \left[z_{p}^{\psi}\right]^{*} \nonumber \\
& \stackrel{\phantom{(\ref{latent_vector_equiv})}}{=} z_{p}^{\psi} \left[z_{p}^{\psi}\right]^{*}. \label{invar_invar}
\end{align}
Now, denoting $\nfield$ as the decoder neural field such that
\begin{align}
\widehat{\psi}_p (q) \equiv \nfield\left(c^{\psi}_{pq}, \textrm{vec}_{j \geq i}\left( z_{p}^{\psi} \left[z_{p}^{\psi}\right]^{*}\right) \right),\label{pred_def}
\end{align}
it follows that for all $q' \in \neigh{\diff(p)} \subset N$ we have 
\begin{align*}
\widehat{[\diff \psi]}_{\diff(p)} (q') & \stackrel{(\ref{pred_def})}{=}  \nfield\left(c^{\diff \psi}_{\diff(p)q'}, \textrm{vec}_{j \geq i}\left( z_{\diff(p)}^{\diff(\psi)} \left[z_{\diff(p)}^{\diff \psi}\right]^{*}\right) \right) \\
& \stackrel{\phantom{(\ref{pred_def})}}{=} \nfield\left(c^{\psi}_{p\diff^{-1}(q')}, \textrm{vec}_{j \geq i}\left( z_{p}^{\psi} \left[z_{p}^{\psi}\right]^{*}\right) \right) \\
& \stackrel{(\ref{pred_def})}{=} \diff \widehat{\psi}_p(q'), 
\end{align*}
with the equality $c^{\diff \psi}_{\diff(p)q'} = c^{\psi}_{p\diff^{-1}(q')}$ following from Equation~(\ref{direc_invar}) and the inveritibility of isometries.
Thus we have $\widehat{\left[\diff \psi\right]}_{\diff(p)} = \diff \widehat{\psi}_{p}$ as desired. 
\end{proof}

\subsection{Field Latent Diffusion Models}
\label{supp:fldm_equiv}
Consider the denoising network $\varepsilon$ as in Equation~(\ref{denoising_network}) and the forward and reverse diffusion processes defined in Equations~(\ref{diff_forward}) and (\ref{reverse_relation}). 

\begin{claim}
If for all latent vector fields $Z \in TM^d$, isometries $\diff: M \rightarrow N$, embeddings $\rho \in \nice{M}{\R^m}$, and timestep $ 0 \leq t \leq T$, the denoising network satisfies the condition of Equation~(\ref{diff_equiv_cond})
\begin{align*}
\diff \varepsilon(Z, t, \rho) = \varepsilon( \gamma Z, t, \gamma \rho),
\end{align*}
then \textit{both} the forward and reverse diffusion processes commute with isometries with
\begin{align*}
\diff Z_t = \left[\diff Z\right]_t \qquad \textrm{and} \qquad \diff \widetilde{Z}_t = \big[\diff \widetilde{Z}\big]_t.
\end{align*}
\end{claim}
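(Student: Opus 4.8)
The plan is to prove the two identities in Equation~(\ref{diff_equiv}) by induction on the timestep $t$, treating the forward process first (which is straightforward) and then the reverse process (which requires the equivariance hypothesis on the denoising network). The key facts I would lean on are: the linearity of the diffeomorphism action $\gamma \cdot$ on vector fields (since $\gamma Z = [d\gamma \cdot Z]\circ \gamma^{-1}$ acts $\R$-linearly, scalar multiples pull through), and the noise equivalence in Equation~(\ref{equiv_noise}), namely that pushing forward $\epsilon \sim \TND{M}$ yields $\epsilon' = \gamma\epsilon \sim \TND{N}$.

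For the forward process, I would start from the definition $[\gamma Z]_t = \sqrt{\alpha_t}\,(\gamma Z) + \sqrt{1-\alpha_t}\,\epsilon'$ with $\epsilon' \sim \TND{N}$, substitute $\epsilon' = \gamma \epsilon$ for $\epsilon \sim \TND{M}$ via Equation~(\ref{equiv_noise}), and then use linearity of $\gamma$ to factor it out:
\begin{align*}
[\gamma Z]_t = \gamma\!\left(\sqrt{\alpha_t}\, Z + \sqrt{1-\alpha_t}\,\epsilon\right) = \gamma Z_t.
\end{align*}
Since the $\alpha_t$ are scalars, this is immediate from Equation~(\ref{diff_forward}); there is no induction needed here as each $Z_t$ is defined directly in terms of $Z$ and a fresh noise sample.

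For the reverse process I would induct downward from $t = T$. The base case is $\widetilde{Z}_T = \widetilde{Z} \sim \TND{M}$, so $\gamma \widetilde{Z}_T = \gamma\widetilde{Z} = \big[\gamma\widetilde{Z}\big]_T$ by Equation~(\ref{equiv_noise}). For the inductive step, I would apply $\gamma$ to Equation~(\ref{reverse_relation}), distribute it across the three terms (the $C_i$ being real scalars, they commute with the linear action), and then handle each term: the first term gives $\gamma \widetilde{Z}_t = \big[\gamma\widetilde{Z}\big]_t$ by the inductive hypothesis; the crucial middle term requires pushing $\gamma$ through the denoising network, which is exactly where the hypothesis $\gamma\,\varepsilon(\widetilde{Z}_t,t,\rho) = \varepsilon(\gamma\widetilde{Z}_t, t, \gamma\rho)$ from Equation~(\ref{diff_equiv_cond}) is invoked; and the noise term $\gamma\epsilon$ is again rewritten as a sample from \TND{N} via Equation~(\ref{equiv_noise}). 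Collecting terms reconstructs the right-hand side of Equation~(\ref{reverse_relation}) on $N$, i.e. $\big[\gamma\widetilde{Z}\big]_{t-1}$.

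The main obstacle I anticipate is bookkeeping the equivariance of the denoising network in the middle term, since one must feed the network the \emph{already-pushed-forward} field $\gamma\widetilde{Z}_t$ and conditioning $\gamma\rho$ rather than the original arguments; here the inductive hypothesis $\gamma\widetilde{Z}_t = \big[\gamma\widetilde{Z}\big]_t$ is needed to identify the network's input on $N$ with the correct iterate, so the induction and the network-equivariance hypothesis must be applied in tandem. A secondary subtlety is the stochastic reading of these identities: because fresh noise is drawn at each step, the equalities should be understood as equalities in distribution (or under the coupling $\epsilon' = \gamma\epsilon$), and I would state this coupling explicitly so that Equation~(\ref{equiv_noise}) can be applied cleanly at each term without ambiguity.
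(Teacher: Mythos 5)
Your proposal matches the paper's proof essentially step for step: the forward process is handled directly by factoring $\gamma$ out of Equation~(\ref{diff_forward}) via the noise coupling of Equation~(\ref{equiv_noise}), and the reverse process by downward induction from $t=T$, applying the inductive hypothesis to the first term, Equation~(\ref{diff_equiv_cond}) to the denoising-network term, and Equation~(\ref{equiv_noise}) to the fresh noise term. Your explicit remark about reading the identities under the coupling $\epsilon' = \gamma\epsilon$ is a sound clarification of what the paper leaves implicit, but it does not change the argument.
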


\noindent \textbf{Note:} At the risk of abusing notation, we use the subscript $t$ to refer to both the forward and reverse diffusion processes. To disambiguate the two, we denote latent vector fields sampled from the reverse process with a tilde, \textit{e.g.} $\widetilde{Z}_t \in TM^d$. 

\begin{proof}
Consider any latent vector fields $Z \in TM^d$ and isometry $\diff:M \rightarrow N$. The forward process does not depend on the denoising network and demonstrating equivariance is straight-forward as 
\begin{align*}
\diff Z_t & \stackrel{(\ref{diff_forward})}{=} \sqrt{\alpha_{t}} \diff Z + \sqrt{1 - \alpha_t} \diff \epsilon, \quad \epsilon \sim \TND{M} \\ 
 & \stackrel{(\ref{equiv_noise})}{=} \sqrt{\alpha_{t}} \diff Z + \sqrt{1 - \alpha_t} \epsilon', \quad \epsilon' \sim \TND{N} \\ 
& \stackrel{(\ref{diff_forward})}{=} \left[\diff Z\right]_t
\end{align*}
Now, suppose the denoising network $\varepsilon$ satisfies the condition. Equivariance of the reverse process can be shown through induction, with the base case $t = T$ holding as
\begin{align*}
\diff \widetilde{Z}_T & \stackrel{\phantom{(\ref{equiv_noise})}}{=} \diff \widetilde{Z}, \quad \widetilde{Z} \sim \TND{M} \\
& \stackrel{(\ref{equiv_noise})}{=} \diff \widetilde{Z}, \quad \diff \widetilde{Z} \sim \TND{N} \\
& \stackrel{\phantom{(\ref{equiv_noise})}}{=} \big[\diff \widetilde{Z}]_T
\end{align*}
Then, assuming $\diff \widetilde{Z}_\tau = \big[\diff \widetilde{Z}\big]_\tau$ holds at step $\tau$, for step $\tau - 1$ we have 
\begin{align*}
\diff \widetilde{Z}_{\tau-1} & \stackrel{(\ref{reverse_relation})}{=} C_1(\alpha_\tau, \alpha_{\tau-1}) \, \diff \widetilde{Z}_\tau  \\
& \phantom{\stackrel{(\ref{reverse_relation})}{=}} + C_2(\alpha_\tau, \alpha_{\tau-1}) \, \diff \varepsilon(\widetilde{Z}_\tau, \tau, \rho)  \\
& \phantom{\stackrel{(\ref{reverse_relation})}{=}} + C_3(\alpha_\tau, \alpha_{\tau-1}) \, \diff \epsilon, \quad \epsilon \sim \TND{M} \\
& 
\labsym{=}{(\ref{equiv_noise})}{(\ref{reverse_relation})} C_1(\alpha_\tau, \alpha_{\tau-1}) \, \diff \widetilde{Z}_\tau \\
 & \phantom{\stackrel{(\ref{reverse_relation})}{=}} + C_2(\alpha_\tau, \alpha_{\tau-1}) \, \diff \varepsilon(\widetilde{Z}_\tau, \tau, \rho)  \\
& \phantom{\stackrel{(\ref{reverse_relation})}{=}} + C_3(\alpha_\tau, \alpha_{\tau-1}) \, \epsilon', \quad \epsilon' \sim \TND{N} \\
& \stackrel{(\ref{diff_equiv_cond})}{=} C_1(\alpha_\tau, \alpha_{\tau-1}) \, \diff \widetilde{Z}_\tau  \\
& \phantom{\stackrel{(\ref{reverse_relation})}{=}} + C_2(\alpha_\tau, \alpha_{\tau-1}) \, \varepsilon(\diff \widetilde{Z}_\tau, \tau, \diff \rho)  \\
& \phantom{\stackrel{(\ref{reverse_relation})}{=}} + C_3(\alpha_\tau, \alpha_{\tau-1}) \, \epsilon' \\
& \stackrel{\phantom{(\ref{diff_equiv_cond})}}{=} C_1(\alpha_\tau, \alpha_{\tau-1}) \, \big[\diff \widetilde{Z}\big]_\tau \\
& \phantom{\stackrel{(\ref{reverse_relation})}{=}} + C_2(\alpha_\tau, \alpha_{\tau-1}) \, \varepsilon(\big[\diff \widetilde{Z}\big]_\tau, \tau, \diff \rho)  \\
& \phantom{\stackrel{(\ref{reverse_relation})}{=}} + C_3(\alpha_\tau, \alpha_{\tau-1}) \, \epsilon' \\ 
& \stackrel{\phantom{(\ref{diff_equiv_cond})}}{=} \big[\diff \widetilde{Z}]_{\tau - 1},
\end{align*}
with the second to last equality following from the induction assumption. Thus, by induction we have $\diff \widetilde{Z}_t = \big[\diff \widetilde{Z}\big]_t$ for $0 \leq t \leq T$ as desired. 
\end{proof}
\section{Implementation Details}

\label{impl_details}
\paragraph{Tangent bases} At each vertex, the associated tangent space lies in a plane perpendicular to the vertex normal direction.  We construct orthonormal bases in the tangent space at a given vertex by simply choosing an arbitrary unit vector perpendicular to the normal as the $x-$axis, then taking its cross product with the normal to get the $y-$axis.

\paragraph{Linearities, Nonlinearities, and Normalization}
With the exception of the decoder neural field $\nfield$ as in Equation~(\ref{pred_def}) and a few layers in the encoder, all of our networks are complex --- all linear layers and convolutional filters have complex-valued weights, with additive biases omitted to preserve equivariance. 

For both FL-VAE and FLDM, we use modified versions of  Vector Neurons (\textbf{VNs}) \cite{deng2021vector} and Filter Response Normalization \cite{singh2020filter} as nonlinear and normalization layers for tangent vector features. Nonlinearities are learnable mappings between $TM^C$ and $TM^{C'}$ applied point-wise. Letting $K, Q \in TM^{C'}$ be the output of two learnable linear layers taking features $X \in TM^C$ as input, the $c'-$th output feature $X'_{c'} \in TM$ of our nonlinear layers is given by 
\begin{align}
X'_{c'}(p) = Q_{c'}(p) + \frac{\textrm{ReLU}\left(-\textrm{Re}(\overline{Q_{c'}(p)}K_{c'}(p))\right)}{\abs{K_{c'}(p)}^2 + \delta} K_{c'}(p), \label{vn_nonlin}
\end{align}
where $\delta > 0$ a constant and the argument of the $\textrm{ReLU}$ function is the inner product of $Q_{c'}(p)$ and $K_{c'}(p)$ in $T_pM$. 

Similarly, given tangent vector features $X \in TM^C$, normalization is applied on a per-channel basis independent of the batch size via the mapping 
\begin{align}
X_c \mapsto \frac{a_c X_c}{\sqrt{\E_{p \in M} \abs{X_c(p)}^2 + \delta_c}}, \label{frn_norm}
\end{align}
where $a_c \in \C$ and $\delta_c \in \R_{> 0}$ are learnable parameters. 

It is easy to see that both our normalization and nonlinear layers are equivariant under isometries, as they preserve magnitudes, inner products, and areas (which preserves the computation of means).  
\subsection{Field Latents}
\label{fl_architecture}

\paragraph{Encoder architecture} The encoder operates pointwise, and is carefully constructed so as to satisfy the conditions for equivariance in Equation~(\ref{fl_equiv_cond}).
At each vertex $p$, a fixed number of points are uniformly sampled in the one ring neighborhood at which the texture is evaluated. The resulting collection of $3-$channel scalar features $\{\psi(q_i)\}$ are lifted to a collection of $C-$channel tangent vector features $\{ \zeta_{pq_i}^{\psi} \}$ by multiplying the logarithms by the output of a linear layer ${\mathcal{L}_1}: \R^3 \rightarrow \C^C$ such that 
\begin{align}
\zeta_{pq_i}^{\psi} = {\mathcal{L}}_1(\psi(q_i)) \cdot \log_p q_i
\end{align}
A token feature $\xi_p^{\psi} \in \C_p^C$ is initialized via a gathering operation with 
\begin{align}
\xi_p^{\psi} = \sum_{i} \log_p q_i  \cdot {\mathcal{L}_2}(\psi(q_i)) \odot f(\abs{\log_p q_i}) ,
\end{align}
where ${\mathcal{L}}_2: \R^3 \rightarrow \C^C$ is a linearity and $f: \R \rightarrow \C^C$ is a filter parameterized by a two-layer MLP. It is easy to see that the lifted features satisfy 
\begin{align}
\eval{d \diff}{p} \cdot \zeta_{pq_i}^{\psi} = \zeta_{\diff(p)\diff(q_i)}^{\diff \psi} \quad \textrm{and} \quad \eval{d \diff}{p} \cdot \xi_{p}^{\psi} = \xi_{\diff(p)}^{\diff \psi}
\end{align}
for any isometry $\diff: M \rightarrow N$ due to the transformation of the logarithm map and the preservation of its magnitude. 

 The concatenation $\{ \zeta_{pq_i}^{\psi} \} \cup \{ \xi_p^{\psi} \}$ is then passed to eight successive VN-Transformer layers \cite{assaad2022vn}; each layer consists of a normalization, followed by a multi-headed attention block over the sample dimension, a second normalization, and two tangent vector neurons, with residual connections after the attention layer and final nonlinearity. The linear layers in the attention block use complex weights, with the the real part of product between the keys and values passed to the softmax --- a  construction that is equivariant under both isometries and the ordering of samples.

Afterwards, the token feature is extracted from the output of the VN-Transformer layers $\{ \widehat{\zeta}_{pq_i}^{\psi} \} \cup \{ \widehat{\xi}_p^{\psi} \}$ and used to predict the mean and standard deviation. Specifically, the token feature is passed to a two-layer equivariant MLP $\mathcal{M}_1: \C^{C} \rightarrow \C^{d}$ using VN activations to predict the mean with \begin{align}
\mu_p^{\psi} & = \mathcal{M}_1\left(\widehat{\xi}_p^{\psi}\right). 
\end{align}
An invariant scalar feature is constructed from the token with a learnable product \cite{sharp22diffusionnet} and used to predict the log of the standard deviation with 
\begin{align} 
\textrm{ln} \, \sigma_p^{\psi} = \mathcal{M}_2\left(\overline{\widehat{\xi}_{p}^{\psi}} \odot \mathcal{L}_3(\widehat{\xi}_p^{\psi}) \right),
\end{align}
where $\mathcal{L}_3: \C^C \rightarrow \C^C$ is a complex linearity and $\mathcal{M}_2: \C^{C} \rightarrow \R^d$ is a two-layer real-valued MLP with SiLU activations, taking the stacking of the real and imaginary components of the invariant feature as input.

In practice, we sample $64$ uniformly distributed points in each one-ring, and use $C = 128$ channels in all layers of the encoder network. 

\paragraph{Decoder architecture}
The learnable component of the FL decoder is the neural field $\nfield$ to which the concatenation of the positional and invariant features are passed to predict the texture value as in Equation~(\ref{pred_def}). Specifically, the neural field is a five-layer real-valued MLP with SiLU activations and $512$ channels in each layer, taking the stacking of the real and imaginary components of the complex features as input.

\paragraph{Training}

During training we consider the reconstruction loss over the one-rings 
\begin{align}
L_{\textrm{R}} = \E_{ \, p \in M, q \in \neigh{p}} \, \norm{\psi(q) - \widehat{\psi}_p(q)}_1, \label{latent_recon_loss}
\end{align}
in addition to the Kullback-Leibler divergence to regularize the distributions of the latent codes in the tangent space
\begin{align}
\begin{aligned}
&L_{\textrm{KL}} = \\ &\frac{1}{2} \, \E_{p \in M} \sum_{i=1}^d \bigg[1 + \textrm{ln} \left[\sigma_p\right]_i^2 
 -\abs{\left[\mu_p\right]_i}^2 - \left[\sigma_p\right]_i^2\bigg]. 
\end{aligned} \label{latent_kl_loss}
\end{align}
The FL-VAE is trained to optimize the sum of the reconstruction and KL losses,
\begin{align}
L_{\textrm{FL}} = L_{\textrm{R}} + \lambda L_{\textrm{KL}}, \label{latent_loss}
\end{align}
with $\lambda > 0$ controlling the weight between the terms. 

We train a single FL-VAE for use in all experiments. It is trained with $\lambda = 0.01$. Training is performed for $1.5$M iterations with a batch size of $16$ and an initial learning rate of $10^{-3}$, decaying to $10^{-5}$ on a cosine schedule. Running on an NVIDIA A100 GPU, convergence usually happens within five hours. 

The FL-VAE is trained using OpenImagesV4 \cite{kuznetsova2020open} which contains \emph{only} images. Planar meshes are randomly generated using farthest point sampling followed by Delaunay triangulation, and are are overlaid on these images. The FL-VAE is trained on the one-rings of the resulting textured 2D meshes. By design, the FL-VAE is agnostic to the 3D embedding of one-rings because it only interfaces with their local flattening, allowing for zero-shot encoding and decoding on arbitrary 3D meshes at inference.

As a general rule, the number and variance of pixel values in one-rings during training should be equal to or greater than what is expected on target 3D assets to ensure robustness (we train on a high-res, diverse image dataset OpenImagesV4 to ensure high-variance).  Furthermore the latent dimension should increase proportional to the number of pixels expected in each one-ring, which is dependent on both mesh and texture resolution.

As the FL-VAE does not see highly asymmetric one-rings at training, to improve robustness at inference we uniformly remesh target 3D assets (though varying the planar mesh quality during training may achieve a similar result).

\subsection{Field Latent Diffusion Models}
\label{fldm_architecture}

\paragraph{Reverse process} The values of the coefficients in the reverse process in Equation~(\ref{reverse_relation}) are given by \cite{ho2020denoising, song2020denoising}:
\begin{align}
C_1(\alpha_t, \alpha_{t-1}) & = \sqrt{\frac{\alpha_{t-1}}{\alpha_t}} \\
C_2(\alpha_t, \alpha_{t-1}) & =   \sqrt{\frac{\alpha_{t-1}(1 - \alpha_t)}{\alpha_t}} \sqrt{\frac{(1 - \alpha_{t-1})^2 \alpha_t}{\alpha_{t-1}(1 - \alpha_t)}} \\ 
C_3(\alpha_t, \alpha_{t-1}) &= \sqrt{\frac{1 - \alpha_{t-1}}{1 - \alpha_t} \left(1 - \frac{\alpha_t}{\alpha_{t-1}}\right)}
\end{align}

\paragraph{Choice of convolution operator}

A majority of existing, state-of-the-art surface convolution operators are designed process scalar features, either alone \cite{sharp22diffusionnet, hu2022subdivision} or in tandem with tangent vector features in a multi-stream approach \cite{Wiersma2020, wiersma2022deltaconv}.  To our knowledge, field convolutions \cite{mitchel2021field} are the only existing state-of-the-art surface convolution designed specifically to process tangent vector features. The richness of the convolution operators in the multi-stream approaches is due to the intermixing of features, and the tangent vector convolution operators themselves are individually undiscriminating. More generally, field convolutions are part of a larger framework for equivariant convolutions that has demonstrated success across a wide variety of modalities and applications \cite{mitchel2021echo, mitchelExtConv2020, mitchel2022mobius, mitchel2022ext}.  

However, DiffusionNet \cite{sharp22diffusionnet} is relatively fast for a surface network and connectivity agnostic --- attractive properties for a denoising network. We experimented with extending it  to handle tangent vector features by replacing the Laplacian eigenfunctions with those of the vector Lapalacian and adapting the network with the nonlinearities and normalization in Equations~(\ref{vn_nonlin}) and (\ref{frn_norm}). While we were able to successfully train a denoising model with this architecture, we found that decoded samples had very little diversity and exhibited a ``raggedness'' similar to the textures reconstructed from INFs \cite{koestler2022intrinsic} as seen in Figure~\ref{tex_recon_fig}. The lack of diversity in the generated samples is likely due to the fact that global support is baked-in to DiffusionNet's ``convolutions'', making it challenging to prevent it from overfitting to a single example.

Furthermore, DiffusionNet bandlimits features by projecting them to a low-dimensional (vector) Laplacian eigenbasis. This presents several problems when processing vector fields in FL space. Specifically, the vector Laplacian eigenbasis spans \textit{continuous} vector fields. The FL encoder constructs latent vector fields pointwise, which are not guaranteed to be continuous, let alone smooth. Thus, vector fields in FL space are likely to suffer significant deterioration by projecting onto the low-frequency eigenbasis, potentially leading the material degradation of quality observed in reconstructed samples.

\paragraph{Extending field convolutions}
Field convolutions convolve vector fields $X \in TM^C$ with $C' \times C$ filter banks $\vn{f}_{c'c} \in \nice{\C}{\C}$, returning vector fields $X * \vn{f} \in TM^{C'}$. To simplify notation, we convert to polar coordinates: expressing the $c-$th input vector field and the logarithm in the tangent space at  a point $p \in M$ as
\begin{align}
X_c(p) \equiv \varrho^{c}_p \, e^{i \phi^{c}_p} \qquad \textrm{and} \qquad \log_p q \equiv r_{pq} e^{i\theta_{pq}}.
\end{align}
Similarly, we denote by $\varphi_{pq}$ the angle of rotation resulting from the parallel transport $\PT{p}{q}: T_qM \rightarrow T_pM$ along the shortest geodesic from neighboring points $q$ to $p$ on the surface, such that for any $\vn{v} \in T_qM$, 
\begin{align}
\PT{p}{q}(\vn{v}) \equiv e^{i\varphi_{pq}} \vn{v}
\end{align}
Then, the $c'-$th output of the field convolution of $X$ with $\textbf{f}$ is the vector field \cite{mitchel2021field}
\begin{align}
\begin{aligned}
&  [\, X * \vn{f} \, ]_{c'} (p) = \\ 
& \sum_{c = 0}^{C} \int_{\neigh{p}} \varrho_q^c \, e^{i(\phi_q^{c} + \varphi_{pq})} \, \vn{f}_{c'c}\left( r_{pq} \, e^{i(\theta_{pq} - \phi_q^{c})} \right) dq 
\end{aligned}
\label{field_conv}
\end{align}

We extend field convolutions to interleave scalar embeddings with tangent vector features by convolving vector fields with filters $\vn{f}_{c'c} \in \nice{\C \times \R^e}{\C}$, with $e$ the embedding dimension. Specifically, given a scalar embedding $\pi \in \nice{M}{\R^{e}}$, the $c'-$th output of field convolution of vector fields $X \in TM^C$ and embedding $\pi$ with $C' \times C$ filter banks $\vn{f}_{c'c}  \in \nice{\C \times \R^e}{\C}$ is the vector field 
\begin{align}
\begin{aligned}
&[\, \{X, \pi \} * \vn{f} \, ]_{c'} (p) =  \\
& \sum_{c = 0}^{C} \int_{\neigh{p}} \varrho_q^c \, e^{i(\phi_q^{c} + \varphi_{pq})} \, \vn{f}_{c'c}\left( r_{pq} \, e^{i(\theta_{pq} - \phi_q^{c})}, \pi(q) \right) dq \label{field_conv_emb}
\end{aligned}
\end{align}
For any isometry $\diff: M \rightarrow N$, field convolutions have the property \cite{mitchel2021field}
\begin{align}
\diff X * \vn{f} = \diff [ X * \vn{f}],
\end{align}
due in part to the invariance of the filter arguments under isometries. Thus, it can be shown that the extension of field convolutions in Equation~(\ref{field_conv_emb}) is equivariant in the sense that 
\begin{align}
\{\diff X, \diff \pi\} * \vn{f} = \diff [\, \{X, \pi \} * \vn{f} \,].
\end{align}

Here, our implementation of field convolutions differs slightly from that of the original. The filter support is restricted to the immediately adjacent vertices in the one-ring about a point. Additionally, instead of parameterizing filters with Fourier basis functions, we make use of the PointConv trick \cite{wu2019pointconv, finzi2020generalizing} to efficiently represent filters $\vn{f}_{c'c}$ as three-layer MLPs with eight channels in the interior layers. 

\paragraph{Denoising network}
The denoising network is constructed with FCResNet blocks \cite{mitchel2021field}, modified to inject scalar embeddings $\pi \in \nice{M}{\R^{e}}$ derived from the diffusion timestep and optional user-input features. Each FCResNet block consists of a three-layer VN MLP, with normalization before each VN, followed by a standard field convolution, a second three-layer VN MLP, and a final field convolution with the embedding. 

The architecture of the denoising model is based on the observation that by limiting the size of the network's receptive field, distributions of latent features can be learned without overfitting to the single textural example \cite{wu2023sin3dm, kulikov2023sinddm, wang2022sindiffusion}.  Following \cite{wang2022sindiffusion} the denoising network $\varepsilon$ takes the form of a shallow two-level U-Net with a single downsampling layer. The input layer consists of two FCResNet Blocks, with four FCResNet blocks in the downsampling and final upsampling layers each, for a total of $10$. The receptive field of the network is equivalent to a $28-$ring about each point, which is approximately $6\%$ of the total area of a $30$K vertex mesh. Upsampling and downsampling are performed using mean pooling and nearest-neighbor unpooling, respectively, augmented with parallel transport to correctly express tangent vector features in neighboring tangent spaces. In all experiments, our denoising networks use $96$ channels per FCResNet block at the finest resolution, increasing to $192$ after downsampling. 

\paragraph{Scaling input latents} In practice, we scale latent vector fields in the diffusion process to have magnitudes proportional to sampled noise in the tangent bundle.  From the definition of Gaussian noise in the tangent bundle in Equation~(\ref{tangent_noise}) as pointwise samples from 2D Gaussians, the expected value of the pointwise magnitude of  $X \sim \mathcal{TN}_M(0, I_1)$ follows the chi distribution with 
\begin{align}
\sqrt{\frac{\pi}{2}} = \E_{p \in M} \abs{X(p)}.
\end{align}
Thus, we scale input latents $Z \in TM^d$ to the diffusion model per-channel such that 
\begin{align}
Z_{i} \mapsto \frac{\sqrt{\pi}}{\sqrt{2}  \, \E_{p \in M} \abs{Z_{i}(p)}} \cdot Z_{i} 
\end{align}
for $1 \leq i \leq d$ to ensure they are represented at the same scale as the added noise. 

\paragraph{Training}

The denoising network is trained subject to the loss in Equation~(\ref{FLDM_loss})  for $300$K iterations, using the Adam \cite{kingma2017adam} optimizer with a batch size of $16$ and initial learning rate of $10^{-3}$, decaying to $10^{-5}$ on a cosine schedule. A computation bottleneck is the memory overhead induced by the irregular (one-ring) kernel support for FCs. To scale using a NVIDIA-V100 with only 16GB of memory, we must rematerialize tensors during backprop, resulting in slower training ($\sim$2 days w/8 V100s). This cost would be mitigated with a higher memory GPU. In contrast, Sin3DM \cite{wu2023sin3dm} reports training in several hours on an A6000 GPU.  For inference, sampling new textures is much more efficient and takes just 10 minutes (compared to 3-5 for Sin3DM).  While we found other popular surface networks to be unsuitable for FLDMs, FCs are not a fundamental part of our framework, and in the future could be replaced by a more efficient operator.

More generally, we found that training FLDMs on high-res meshes ($>$10K  vertices) produced more diverse and higher quality textures than training on low-res meshes. This is possibly because the FLDM sees a larger number of latent codes and each code characterizes a smaller ``unit'' of texture, affording greater flexibility in generating patterns and smoothing transitions.

\paragraph{Directional control} 
We expect that user-specified vector fields could also be passed to FLDMs to provide directional control, with embeddings formed by taking dot products with the network features. For example, the user may choose to condition the FLDMs in Fig. \ref{transfer_fig} with a vector field pointing about the whorl of the shell. The same field could be used during sampling (or, alternatively, a rotated or otherwise manipulated vector field) to control the orientation of synthesized features in the desired way.

\section{Additional Results}
See the last pages of this supplementary material document for additional visualizations and comparisons.  

\newpage

\begin{figure*}[!ht]
\begin{picture}(\linewidth,0.66\columnwidth)
\put(0,0){{\includegraphics[width=\linewidth]{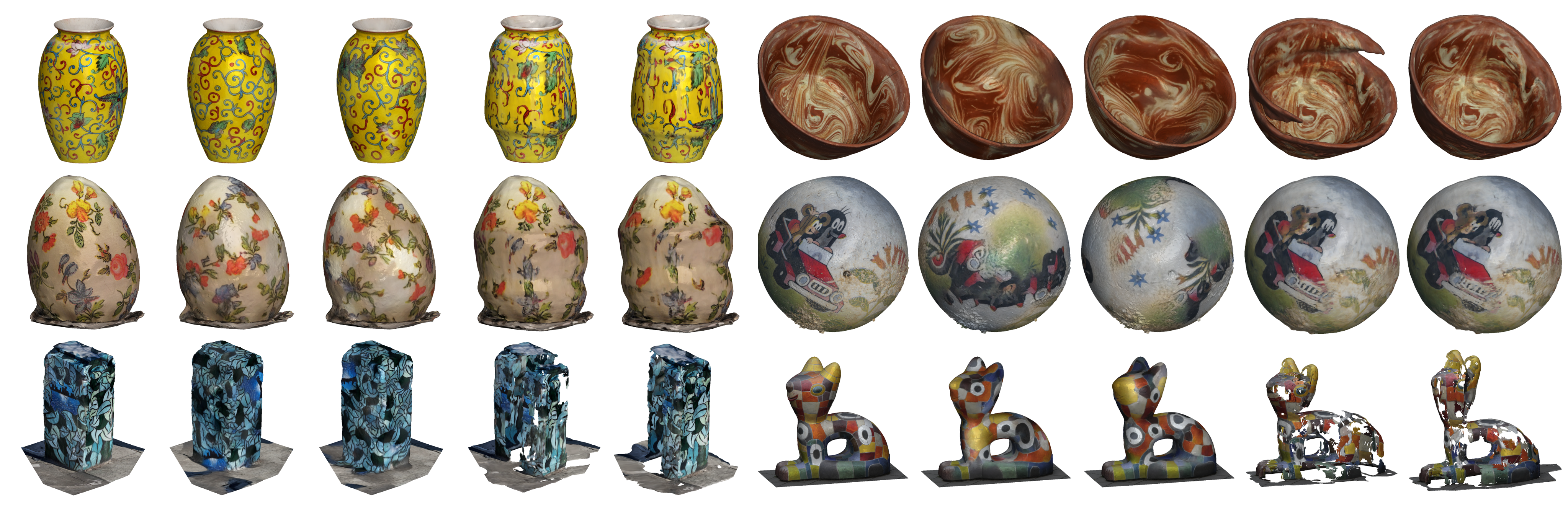}}}
\thicklines
\put(15, 165){ {\small Input}}
\put(8, 162){\line(1,0){38}} 
\put(82, 165){ {\small \textbf{FLDM}}}
\put(60, 161){\line(1,0){78}} 
\put(169, 165){{\small Sin3DM \cite{wu2023sin3dm}}}
\put(153, 161){\line(1,0){78}} 
\put(252, 165){ {\small Input}}
\put(242, 161){\line(1,0){44}} 
\put(322, 165){ {\small \textbf{FLDM}}}
\put(294, 161){\line(1,0){94}} 
\put(419, 165){{\small Sin3DM \cite{wu2023sin3dm}}}
\put(397, 161){\line(1,0){94}} 
\end{picture}
\caption{ Additional samples generated from FLDMs and Sin3DM \cite{wu2023sin3dm} in the unconditional paradigm described in section~\ref{sec:unc_gen}. The electrical box and cat sculpture on the bottom row are not included in our evaluations, but present significant failure cases for Sin3DM, which can struggle to synthesize samples from thin, shell-like models of the kind often arising from real-world 3D scans. \textit{Zoom in to compare.} }
\label{unc_gen_supp_fig}
\end{figure*}

\begin{figure*}[!ht]
\begin{picture}(\linewidth,0.69\columnwidth)
\put(0,0){{\includegraphics[width=\linewidth]{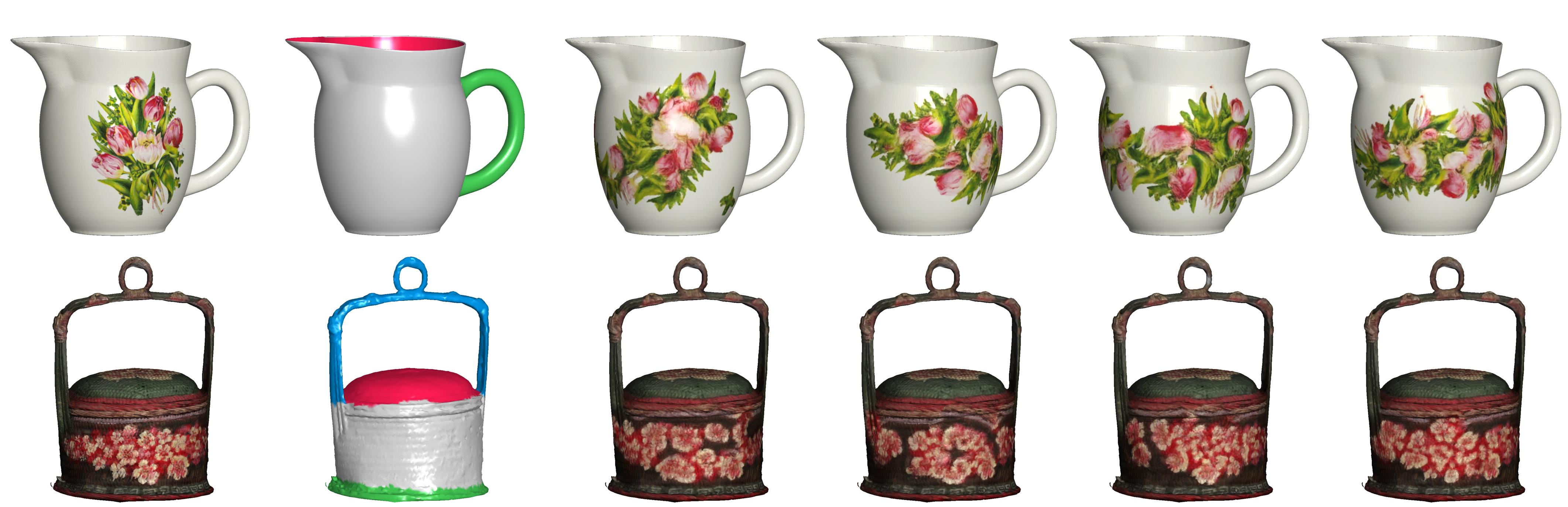}}}
\thicklines
\put(30, 160){ {\small Input}}
\put(4, 156){\line(1,0){72}} 
\put(107, 160){ {\small Labels ($\rho$)}}
\put(91, 156){\line(1,0){72}} 
\put(310, 160){{\small FLDM samples}}
\put(180, 156){\line(1,0){312}} 
\end{picture}
\caption{ Additional examples of label-guided texture synthesis with FLDMs. Many meshes contain interesting textural details only in specific regions and are otherwise uniformly colored. Label guidance can be used to ensure generated textures generally reflect the distribution of content on the training mesh. \textit{Zoom in to view.} }
\label{label_guided_supp_fig}
\end{figure*}

\begin{figure*}[!ht]
\begin{picture}(\linewidth,0.67\columnwidth)
\put(0,0){{\includegraphics[width=\linewidth]{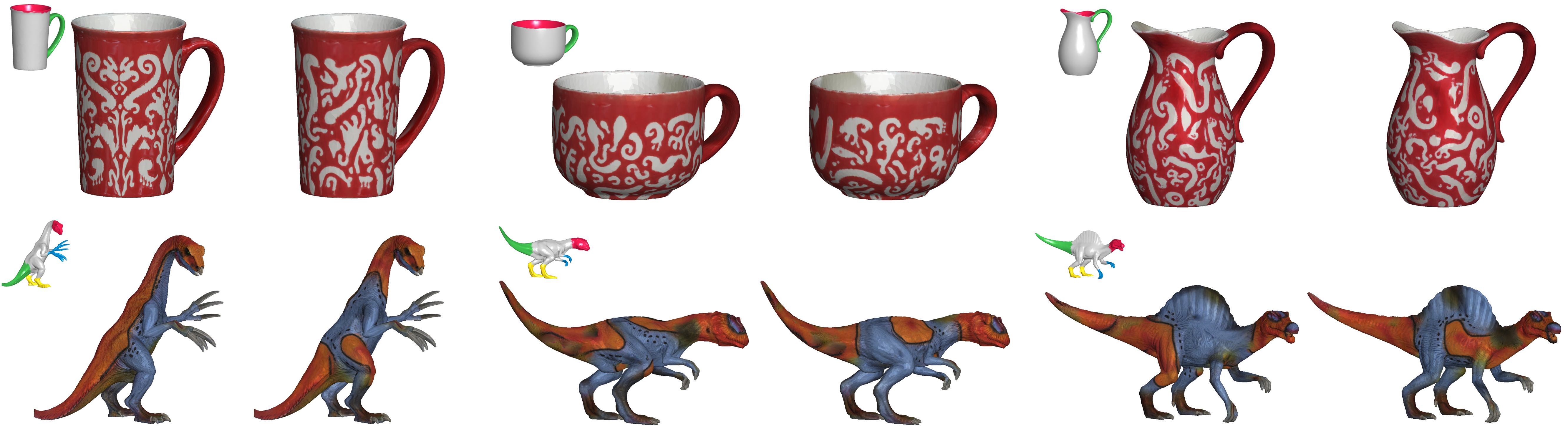}}}
\thicklines
\put(27, 147){ {\small Input}}
\put(4, 143){\line(1,0){68}} 
\put(82, 147){ {\small FLDM samples}}
\put(80, 143){\line(1,0){65}} 
\put(265, 147){{\small FLDM samples on new geometries}}
\put(160, 143){\line(1,0){332}} 
\end{picture}
\caption{ Additional examples of generative texture transfer with FLDMs. The FL-VAE and FLDMs are in fact equivariant under \textit{local} isometries, and are thus able to replicate textural details on new geometries that are only \textit{locally} similar to the training mesh. \textit{Zoom in to view.} }
\label{transfer_supp_fig}
\end{figure*}

\begin{figure*}[!ht]
\begin{picture}(0.5\linewidth,0.25\columnwidth)
\put(0,10){{\includegraphics[width=0.5\linewidth]{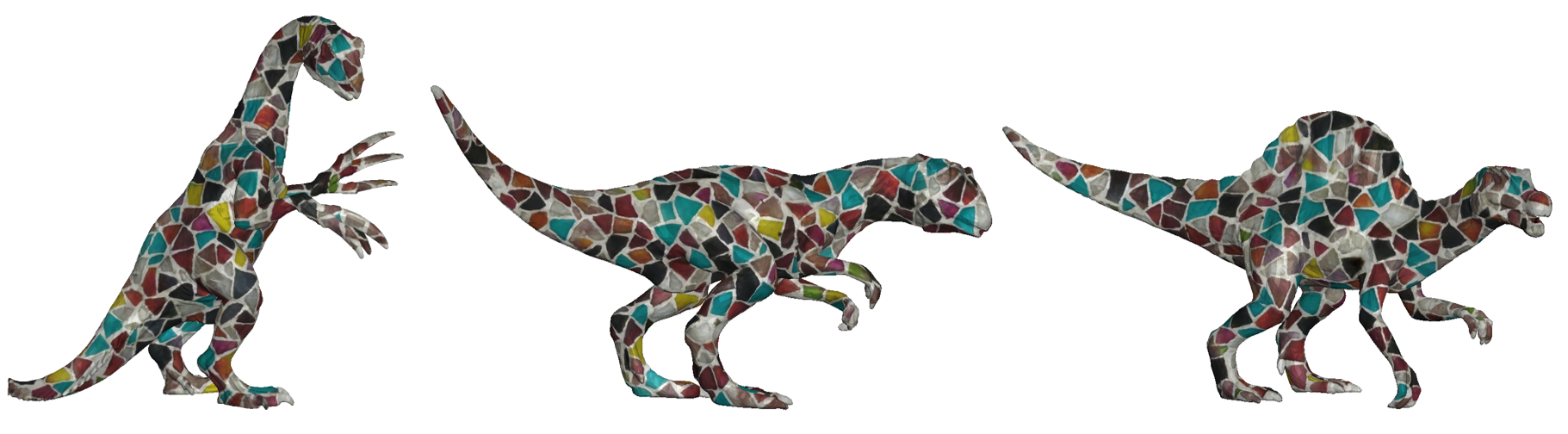}}}
\put(250,0){{\includegraphics[width=0.5\linewidth]{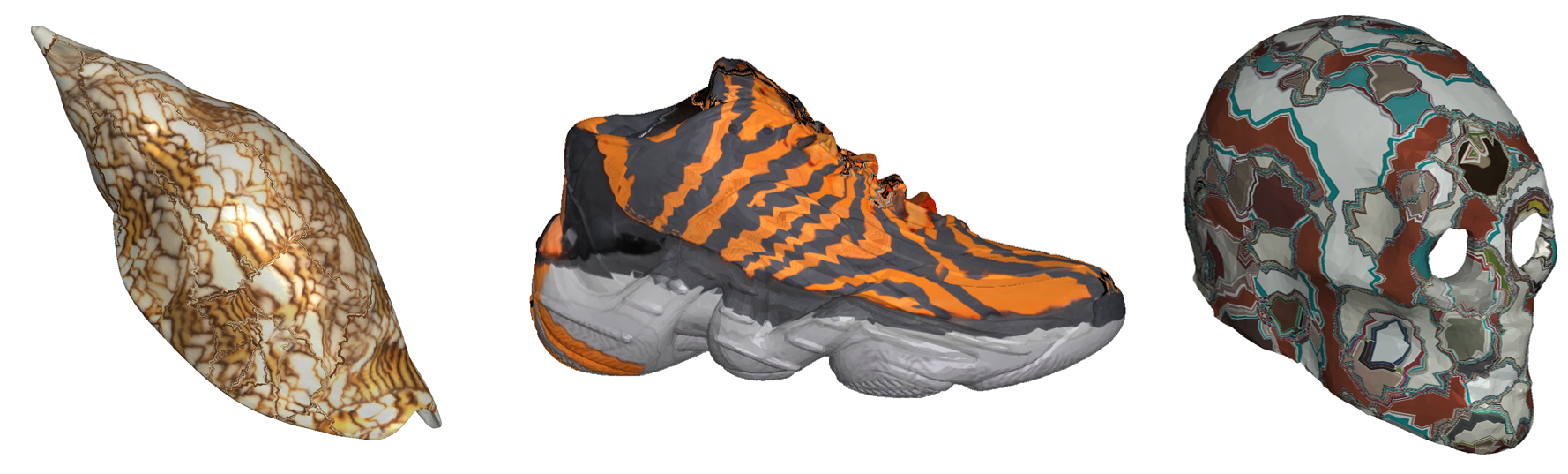}}}
\thicklines
\put(63, 88){ {\small Highly dissimilar FLDM transfer}}
\put(1, 83){\line(1,0){242}} 
\put(330, 88){{\small Transfer with CFMs \cite{donati2022complex}}}
\put(255, 83){\line(1,0){242}} 
\end{picture}
\caption{ \textit{Left:} While we believe the most practical use of generative texture transfer is intra-category, FL-VAEs and FLDMs can in fact transfer textures between  between highly dissimilar shapes such as from the \emph{skull} to \emph{dinosaur} meshes. \textit{Right:} Complex Functional Maps (\textbf{CFMs}) \cite{donati2022complex} transfer textures based on estimated dense correspondences. Despite SoTA performance in correspondence tasks, CFMs struggle in the presence of topological changes, as in the skull, and pointwise mappings don't explicitly promote smooth transfers.}
\label{rebuttal_fig}
\end{figure*}


\begin{table*}{
    \centering
    \begin{tabular}{lc}
    ID & Visualization  \\ \hline \hline
    TROCHILUS\_BOOST & --- \\
    Schleich\_Therizinosaurus\_ln9cruulPqc & --- \\ 
    SAPPHIRE\_R7\_260X\_OC & --- \\   
    Tieks\_Ballet\_Flats\_Electric\_Snake & --- \\ 
    Reebok\_FS\_HI\_INT\_R12 & Figure~\ref{tex_recon_fig} (Top) \\
    Snack\_Catcher\_Snack\_Dispenser & --- \\ 
    Polar\_Herring\_Fillets\_Smoked\_Peppered\_705\_oz\_total & --- \\ 
    Olive\_Kids\_Mermaids\_Pack\_n\_Snack\_Backpack & --- \\ 
    Olive\_Kids\_Trains\_Planes\_Trucks\_Bogo\_Backpack & --- \\ 
    Olive\_Kids\_Dinosaur\_Land\_Munch\_n\_Lunch & --- \\
    Fruity\_Friends & --- \\ 
    Horse\_Dreams\_Pencil\_Case & Figure~\ref{tex_recon_fig} (Bottom) \\ 
    Horses\_in\_Pink\_Pencil\_Case & --- \\ 
    LEGO\_City\_Advent\_Calendar & --- \\ 
    Digital\_Camo\_Double\_Decker\_Lunch\_Bag & --- \\
    ASICS\_GELDirt\_Dog\_4\_SunFlameBlack & ---
    \end{tabular}
    \caption{List of Google Scanned Objects \cite{downs2022google} assets used in the texture compression and reconstruction evaluations in section~\ref{sec:eval_recon}.}
    \label{tab:recon_assets_list}
    } 
\end{table*}

\begin{table*}{
    \centering
    \begin{tabular}{lcc}
    ID & Source & Visualization  \\ \hline \hline
    3969fe35c6444293af27b976dca085a4  & Objaverse & Figure~\ref{unc_gen_supp_fig} (Top left) \\ %
    656f83a4d9224c29abc82ade2207d2ce & Objaverse & --- \\ 
    e7caba92073d4adba3477c21aa25e91f & Objaverse & Figure~\ref{unc_gen_fig} (Top left) \\  
    ed0afde32a194337ba1a18b1018f2d2e & Objaverse & Figure~\ref{unc_gen_fig} (Top right) \\     
    e5712bffe9714a6aaa148b6b262b748d & Objaverse & Figure~\ref{unc_gen_supp_fig} (Center right) \\ 
    f0e3c872f1984cf7a467645d9e0d3abd & Objaverse & Figure~\ref{unc_gen_fig} (Bottom left) \\ 
    2df58d08f5604c058d43e00677d325b6 & Objaverse & Figure~\ref{unc_gen_supp_fig} (Top right) \\ 
    4a6d4fa8eb83401ca9ac0446bad83c0a & Objaverse & Figure~\ref{unc_gen_fig} (Bottom right) \\
    190211a19360444699dccec9eda105e6 & Objaverse & Figure~\ref{unc_gen_supp_fig} (Center left) \\ 
    Now\_Designs\_Bowl\_Akita\_Black & Google Scanned Objects & --- 
    \end{tabular}
    \caption{List of Objaverse \cite{deitke2022objaverse} and Scanned Objects \cite{downs2022google} assets used in the unconditional generation evaluations in section~\ref{sec:unc_gen}.}
    \label{tab:unc_gen_list}
    } 
\end{table*}


\end{document}